%

\documentclass[runningheads]{llncs}
\usepackage{graphicx}
\usepackage{amsmath,amssymb} 
\usepackage{color}
\usepackage{enumitem,kantlipsum}
\usepackage{hyperref}
\usepackage{epsfig}
\usepackage{graphicx}
\usepackage{amssymb}
\usepackage{url}            
\usepackage{booktabs}       
\usepackage{amsfonts}       
\usepackage{nicefrac}       
\usepackage{microtype}      
\usepackage{xcolor}
\usepackage{epsfig}
\usepackage{graphicx}
\usepackage{multirow}
\usepackage{bbm}
\usepackage{mathtools}
\usepackage{bm}
\usepackage{textcomp}
\usepackage{footnote}
\usepackage{tablefootnote}
\usepackage{algorithm}
\usepackage{algpseudocode}

\begin{document}
\title{Pairwise Confusion \\ for Fine-Grained Visual Classification}

\titlerunning{Pairwise Confusion for Fine-Grained Visual Classification}
%
\author{Abhimanyu Dubey\inst{1} \and
Otkrist Gupta\inst{1} \and
Pei Guo\inst{2} \and
Ramesh Raskar\inst{1} \and
Ryan Farrell\inst{2} \and
Nikhil Naik\inst{1,3}
}
%
\authorrunning{A. Dubey, O. Gupta, P. Guo, R. Raskar, R. Farrell and N. Naik}
%

\institute{Massachusetts Institute of Technology, Cambridge MA 02139, USA \\
\email{\{dubeya,otkrist,raskar,naik\}@mit.edu}\and
Brigham Young University, Provo UT 84602, USA \\
\email{peiguo, farrell@cs.byu.edu} \and
Harvard University, Cambridge MA 02139, USA\\
\email{naik@fas.harvard.edu}}
\maketitle              
\begin{abstract}
Fine-Grained Visual Classification (FGVC) datasets contain small sample sizes, along with significant intra-class variation and inter-class similarity. While prior work has addressed intra-class variation using localization and segmentation techniques, inter-class similarity may also affect feature learning and reduce classification performance. In this work, we address this problem using a novel optimization procedure for the end-to-end neural network training on FGVC tasks. Our procedure, called Pairwise Confusion (PC) reduces overfitting by intentionally {introducing confusion} in the activations. With PC regularization, we obtain state-of-the-art performance on six of the most widely-used FGVC datasets and demonstrate improved localization ability. {PC} is easy to implement, does not need excessive hyperparameter tuning during training, and does not add significant overhead during test time.
\end{abstract}

\section{Introduction}
The Fine-Grained Visual Classification (FGVC) task focuses on differentiating between hard-to-distinguish object classes, such as species of birds, flowers, or animals; and identifying the makes or models of vehicles. FGVC datasets depart from conventional image classification in that they typically require expert knowledge, rather than crowdsourcing, for gathering annotations. FGVC datasets contain images with much higher visual similarity than those in large-scale visual classification (LSVC).  Moreover, FGVC datasets have minute inter-class visual differences in addition to the variations in pose, lighting and viewpoint found in LSVC~\cite{lin2015bilinear}. Additionally, FGVC datasets often exhibit long tails in the data distribution, since the difficulty of obtaining examples of different classes may vary. This combination of small, non-uniform datasets and subtle inter-class differences makes FGVC challenging even for powerful deep learning algorithms.

Most of the prior work in FGVC has focused on tackling the \textit{intra-class} variation in pose, lighting, and viewpoint using localization techniques~\cite{lin2015bilinear,jaderberg2015spatial,zhang2016weakly,krause2015fine,zhang2015fine}, and by augmenting training datasets with additional data from the Web~\cite{krause2016unreasonable,cui2016fine}. However, we observe that prior work in FGVC does not pay much attention to the problems that may arise due to the \textit{inter-class} visual similarity in the feature extraction pipeline. Similar to LSVC tasks, neural networks for  FGVC tasks are typically trained with cross-entropy loss~\cite{lin2015bilinear,cui2016fine,lin2017improved,cui2017kernel}. In LSVC datasets such as ImageNet~\cite{imagenet_cvpr09}, strongly discriminative learning using the cross-entropy loss is successful in part due to the significant inter-class variation (compared to intra-class variation), which enables deep networks to learn generalized discriminatory features with large amounts of data.

We posit that this formulation may not be ideal for FGVC, which shows smaller visual differences between classes and larger differences within each class than LSVC. For instance, if two samples in the training set have very similar visual content but different class labels, minimizing the cross-entropy loss will force the neural network to learn features that distinguish these two images with high confidence---potentially forcing the network to learn sample-specific artifacts for visually confusing classes in order to minimize training error. We suspect that this effect would be especially pronounced in FGVC, since there are fewer samples from which the network can learn generalizable class-specific features.

Based on this hypothesis, we propose that introducing \textit{confusion} in output logit activations during training for an FGVC task will force the network to learn slightly less discriminative features, thereby preventing it from overfitting to sample-specific artifacts. Specifically, we aim to \textit{confuse} the network, by minimizing the distance between the predicted probability distributions for random pairs of samples from the training set. To do so, we propose Pairwise Confusion (PC)\footnote{Implementation available at \href{https://github.com/abhimanyudubey/confusion}{\texttt{https://github.com/abhimanyudubey/confusion}}.}, a pairwise algorithm for training convolutional neural networks (CNNs) end-to-end for fine-grained visual classification.

In Pairwise Confusion, we construct a Siamese neural network trained with a novel loss function that attempts to bring class conditional probability distributions closer to each other. Using Pairwise Confusion with a standard network architecture like DenseNet~\cite{huang2016densely} or ResNet~\cite{he2016deep} as a base network, we obtain state-of-the-art performance on six of the most widely-used fine-grained recognition datasets, improving over the previous-best published methods by 1.86\% on average. In addition, PC-trained networks show better localization performance as compared to standard networks. Pairwise Confusion is simple to implement, has no added overhead in training or prediction time, and provides performance improvements both in FGVC tasks and other tasks that involve transfer learning with small amounts of training data.

\section{Related Work}
\textbf{Fine-Grained Visual Classification:} Early FGVC research focused on methods to train with limited labeled data and traditional image features. Yao et al.~\cite{yao2011combining} combined strongly discriminative image patches with randomization techniques to prevent overfitting. Yao et al.~\cite{yao2012codebook} subsequently utilized template matching to avoid the need for a large number of annotations. 

Recently, improved localization of the target object in training images has been shown to be useful for FGVC~\cite{lin2015bilinear,zhang2014part,gao2016compact,Wang_2016_CVPR}. Zhang et al.~\cite{zhang2014part} utilize part-based Region-CNNs~\cite{ren2015faster} to perform finer localization. Spatial Transformer Networks~\cite{jaderberg2015spatial} show that learning a content-based affine transformation layer improves FGVC performance. Pose-normalized CNNs have also been shown to be effective at FGVC~\cite{branson2014bird,zhang2012pose}. Model ensembling and boosting has also improved performance on FGVC~\cite{Moghimi2016}. Lin et al.~\cite{lin2015bilinear} introduced Bilinear Pooling, which combines pairwise local feature sets and improves classification performance. Bilinear Pooling has been extended by Gao et al.~\cite{gao2016compact} using a compact bilinear representation and Cui et al.~\cite{cui2017kernel} using a general Kernel-based pooling framework that captures higher-order interactions of features.

\textbf{Pairwise Learning:} Chopra et al.~\cite{chopra2005learning} introduced a Siamese neural network for handwriting recognition.
Parikh and Grauman~\cite{parikh2011relative} developed a pairwise ranking scheme for relative attribute learning. Subsequently, pairwise neural network models have become common for attribute modeling~\cite{dubey2017modeling,souri2016deep,dubey2016deep,singh2016end}. 

\textbf{Learning from Label Confusion:} Our method aims to improve classification performance by introducing confusion within the output labels. Prior work in this area includes  methods that utilize label noise (e.g.,~\cite{reed2014training}) and data noise (e.g.,~\cite{xiao2015learning}) in training. Krause et al.~\cite{krause2016unreasonable} utilized noisy training data for FGVC. Neelakantan et al.~\cite{neelakantan2015adding} added noise to the gradient during training to improve generalization performance in very deep networks. Szegedy et al.~\cite{szegedy2016rethinking} introduced label-smoothing regularization for training deep Inception models.

In this paper, we bring together concepts from pairwise learning and label confusion and take a step towards solving the problems of overfitting and sample-specific artifacts when training neural networks for FGVC tasks.

\section{Method}
\begin{table}[t]
\centering\footnotesize
\caption{A comparison of fine-grained visual classification (FGVC) datasets with large-scale visual classification (LSVC) datasets. FGVC datasets are significantly smaller and noisier than LSVC datasets.\label{tab:dataset_comparison}}
\setlength\tabcolsep{2.5pt}
\begin{tabular}{lcc} \hline \hline
\multirow{2}{*}{Dataset} & num. & samples\\
& classes & per class\\ \hline
Flowers-102~\cite{nilsback2008automated} & 102 & 10 \\
CUB-200-2011~\cite{wah2011caltech} & 200 & 29.97 \\
Cars~\cite{krause20133d} & 196 & 41.55 \\
NABirds~\cite{van2015building} & 550 & 43.5 \\
Aircrafts~\cite{maji2013fine} & 100 & 100 \\
Stanford Dogs~\cite{khosla2011novel} & 120 & 100 \\ \hline \hline
\end{tabular}
\hspace{10pt}
\begin{tabular}{lcc} \hline \hline
\multirow{2}{*}{Dataset} & num. & samples\\
& classes & per class\\ \hline
CIFAR-100~\cite{krizhevsky2014cifar} & 100 & 500 \\
ImageNet~\cite{imagenet_cvpr09} & 1000 & 1200 \\
CIFAR-10~\cite{krizhevsky2014cifar} & 10 & 5000 \\
SVHN~\cite{netzer2011reading} & 10 & 7325.7 \\ \hline \hline
\end{tabular}
\end{table}

FGVC datasets in computer vision are orders of magnitude smaller than LSVC datasets and contain greater imbalance across classes (see Table~\ref{tab:dataset_comparison}). Moreover, the samples of a class are not accurately representative of the complete variation in the visual class itself. The smaller dataset size can result in overfitting when training deep neural architectures with large number of parameters---even with preliminary layers being frozen. In addition, the training data may not be completely representative of the real-world data, with issues such as more abundant sampling for certain classes.  For example, in FGVC of birds, certain species from geographically accessible areas may be overrepresented in the training dataset. As a result, the neural network may learn to latch on to sample-specific artifacts in the image, instead of learning a versatile representation for the target object. We aim to solve both of these issues in FGVC (overfitting and sample-specific artifacts) by bringing the different class-conditional probability distributions closer together and \textit{confusing} the deep network, subsequently reducing its prediction over-confidence, thus improving  generalization performance.


Let us formalize the idea of {``confusing''} the conditional probability distributions. Consider the conditional probability distributions for two input images $\mathbf x_1$ and $\mathbf x_2$, which can be given by $p_\theta(\mathbf y | \mathbf x_1)$ and $p_\theta(\mathbf y | \mathbf x_2)$ respectively. For a classification problem with $N$ output classes, each of these distributions is an N-dimensional vector, with each element $i$ denoting the belief of the classifier in class $\mathbf y_i$ given input $\mathbf x$. If we wish to \textit{confuse} the class outputs of the classifier for the pair $\mathbf x_1$ and $\mathbf x_2$, we should learn parameters $\theta$ that bring these conditional probability distributions ``closer'' under some distance metric, that is, make the predictions for $\mathbf x_1$ and $\mathbf x_2$ similar.

While KL-divergence might seem to be a reasonable choice to design a loss function for optimizing the distance between conditional probability distributions, in Section~\ref{sec:3.1}, we show that it is infeasible to train a neural network when using KL-divergence as a regularizer. Therefore, we introduce the Euclidean Distance between distributions as a metric for confusion in Sections~\ref{sec:3.2} and~\ref{sec:3.3} and describe neural network training with this metric in Section~\ref{sec:3.4}.

\subsection{Symmetric KL-divergence or Jeffrey's Divergence\label{sec:3.1}}

The most prevalent method to measure dissimilarity of one probability distribution from another is to use the Kullback-Liebler (KL) divergence. However, the standard KL-divergence cannot serve our purpose owing to its asymmetric nature. This could be remedied by using the \textit{symmetric} KL-divergence, defined for two probability distributions $P, Q$ with mass functions $p(\cdot), q(\cdot)$ (for events $u \in \mathcal U$):
\begin{equation}
\mathbb D_{\mathsf J} (P, Q) \triangleq \sum_{u \in \mathcal U} \Big[p(u)\cdot\log\frac{p(u)}{q(u)}+q(u)\cdot\log\frac{q(u)}{p(u)}\Big] = \mathbb D_{\mathsf{KL}} (P || Q) + \mathbb D_{\mathsf{KL}} (Q || P)
\end{equation}
This symmetrized version of KL-divergence, known as Jeffrey's divergence~\cite{jeffreys1998theory}, is a measure of the average relative entropy between two probability distributions~\cite{kullback1951information}. For our model parameterized by $\theta$, for samples $\mathbf x_1$ and $\mathbf x_2$, the Jeffrey's divergence can be written as:
\begin{equation}
\mathbb D_{\mathsf J} (p_\theta(\mathbf y | \mathbf x_1), p_\theta(\mathbf y | \mathbf x_2)) = \sum_{i=1}^N \Big[(p_\theta(\mathbf y_i | \mathbf x_1) - p_\theta(\mathbf y_i | \mathbf x_2))\cdot\log\frac{p_\theta(\mathbf y_i | \mathbf x_1)}{p_\theta(\mathbf y_i | \mathbf x_2)}\Big] \label{eqn:jef2}
\end{equation}
Jeffrey's divergence satisfies all of our basic requirements of a symmetric divergence metric between probability distributions, and therefore could be included as a regularizing term while training with cross-entropy, to achieve our desired confusion. However, when we learn model parameters using stochastic gradient descent (SGD), it can be difficult to train, especially if our distributions $P, Q$ have mass concentrated on different events. This can be seen in Equation~\ref{eqn:jef2}. Consider Jeffrey's divergence with $N=2$ classes, and that $\mathbf x_1$ belongs to class 1, and $\mathbf x_2$ belongs to class 2. If the model parameters $\theta$ are such that it correctly identifies both $\mathbf x_1$ and $\mathbf x_2$ by training using cross-entropy loss, $p_\theta(\mathbf y_1 | x_1) = 1 - \delta_1$ and $p_\theta(\mathbf y_2 | x_2) = 1 - \delta_2$, where $0 < \delta_1, \delta_2 < \frac{1}{2}$ (since the classifier outputs correct predictions for the input images), we can show:
\begin{align}
\mathbb D_{\mathsf J} (p_\theta(\mathbf y | \mathbf x_1), p_\theta(\mathbf y | \mathbf x_2)) &\geq (1 - \delta_1 -\delta_2)\cdot(2\log(1-\delta_1-\delta_2) -\log(\delta_1\delta_2)) \label{eqn:jef3}
\end{align}
Please see the supplementary material for an expanded proof.

As training progresses with these labels, the cross-entropy loss will motivate the values of $\delta_1$ and $\delta_2$ to become closer to zero (but never equaling zero, since the probability outputs $p_\theta(\mathbf y | \mathbf x_1), p_\theta(\mathbf y | \mathbf x_2)$ are the outputs from a softmax). As $(\delta_1, \delta_2) \rightarrow (0^+, 0^+)$, the second term $-\log(\delta_1 \delta_2)$ on the R.H.S. of inequality~(\ref{eqn:jef3}) typically grows whereas $(1-\delta_1-\delta_2)$ approaches 1, which makes $\mathbb D_{\mathsf J} (p_\theta(\mathbf y | \mathbf x_1), p_\theta(\mathbf y | \mathbf x_2))$ larger as the predictions get closer to the true labels. In practice, we see that training with $\mathbb D_{\mathsf J} (p_\theta(\mathbf y | \mathbf x_1), p_\theta(\mathbf y | \mathbf x_2))$ as a regularizer term diverges, unless a very small regularizing parameter is chosen, which removes the effect of regularization altogether.

A natural question that can arise from this analysis is that cross-entropy training itself involves optimizing KL-divergence between the target label distribution and the model's predictions, however no such divergence occurs. This is because cross-entropy involves only one direction of the KL-divergence, and the target distribution has all the mass concentrated at one event (the correct label). Since $(x\log x) |_{x=0} = 0$, for predicted label vector $\mathbf y'$ with correct label class $c$, this simplifies the cross-entropy error $\mathcal L_{\mathsf{CE}}(p_\theta(\mathbf y | \mathbf x), \mathbf y')$ to be:
\begin{equation}
\mathcal L_{\mathsf{CE}}(p_\theta(\mathbf y | \mathbf x), \mathbf y') = - \sum_{i=1}^N \mathbf y'_i \log(\frac{p_\theta(\mathbf y_i | \mathbf x)}{\mathbf y'_i}) = - \log(p_\theta(\mathbf y_c | \mathbf x)) \geq 0
\end{equation}
This formulation does not diverge as the model trains, i.e. $p_\theta(\mathbf y_c | \mathbf x) \rightarrow 1$. In some cases where label noise is added to the label vector (such as label smoothing~\cite{reed2014training,szegedy2015going}), the label noise is a fixed constant and not approaching zero (as in the case of Jeffery's divergence between model predictions) and is hence feasible to train. Thus, Jeffrey's Divergence or symmetric KL-divergence, while a seemingly natural choice, cannot be used to train a neural network with SGD. This motivates us to look for an alternative metric to measure ``confusion'' between conditional probability distributions.

\subsection{Euclidean Distance as Confusion\label{sec:3.2}}
Since the conditional probability distribution over $N$ classes is an element within $\mathbb R^N$ on the unit simplex, we can consider the Euclidean distance to be a metric of ``confusion'' between two conditional probability distributions. Analogous to the previous setting, we define the \textbf{Euclidean Confusion} $\mathbb D_{\mathsf{EC}} (\cdot, \cdot)$ for a pair of inputs $\mathbf x_1, \mathbf x_2$ with model parameters $\theta$ as:
\begin{equation}
\mathbb D_{\mathsf{EC}}(p_\theta(\mathbf y | \mathbf x_1), p_\theta(\mathbf y | \mathbf x_2)) = \sum_{i=1}^N (p_\theta(\mathbf y_i | \mathbf x_1) - p_\theta(\mathbf y_i | \mathbf x_2))^2 = \lVert p_\theta(\mathbf y | \mathbf x_1) - p_\theta(\mathbf y | \mathbf x_2)  \rVert_2^2
\end{equation}
Unlike Jeffrey's Divergence, Euclidean Confusion does not diverge when used as a regularization term with cross-entropy. However, to verify this unconventional choice for a distance metric between probability distributions, we prove some properties that relate Euclidean Confusion to existing divergence measures.
\begin{lemma}
\label{lemma_1}
On a finite probability space, the Euclidean Confusion $\mathbb D_{\mathsf{EC}}(P, Q)$ is a lower bound for the Jeffrey's Divergence $\mathbb D_{\mathsf J}(P, Q)$ for probability measures $P, Q$.
\end{lemma}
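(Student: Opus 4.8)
The plan is to exploit the fact that both $\mathbb D_{\mathsf{EC}}$ and $\mathbb D_{\mathsf J}$ are sums of per-event contributions over the finite space $\mathcal U$, and to establish the inequality \emph{termwise}. Writing $p = p(u)$ and $q = q(u)$ for a fixed event, I would prove the single-variable inequality
\[
(p - q)^2 \leq (p - q)\,\log\frac{p}{q},
\]
and then sum over all $u \in \mathcal U$ to obtain $\mathbb D_{\mathsf{EC}}(P, Q) \leq \mathbb D_{\mathsf J}(P, Q)$. Since this termwise statement is symmetric under interchanging $p$ and $q$ (both sides are left unchanged), I would assume without loss of generality that $p \geq q$, so that $p - q \geq 0$ and it suffices to show $p - q \leq \log(p/q)$, with the case $p = q$ being trivial.

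The key step is the elementary logarithmic bound $\log x \geq 1 - 1/x$ for $x > 0$. Applying it with $x = p/q$ gives $\log(p/q) \geq 1 - q/p = (p - q)/p$. Here is where the structure of a probability space enters decisively: because $p$ is a probability mass, $p \leq 1$, and hence $(p - q)/p \geq p - q$ whenever $p - q \geq 0$. Chaining the two inequalities yields $\log(p/q) \geq (p-q)/p \geq p - q$, which is exactly the desired termwise claim.

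The main obstacle, and really the crux of the argument, is recognizing that this bound is \emph{false} for arbitrary positive reals and holds only because the masses are bounded above by $1$; the step $(p-q)/p \geq p - q$ relies entirely on $p \leq 1$. I would therefore flag this constraint explicitly, since it is precisely the feature of the unit simplex that makes Euclidean Confusion a lower bound. Two minor points remain: first, the reduction to $p \geq q$ must be justified by checking that swapping $p$ and $q$ leaves both $(p-q)^2$ and $(p-q)\log(p/q)$ invariant; second, on a finite probability space one assumes full support (all masses strictly positive) so that each $\log(p/q)$ is well defined, consistent with the standing assumption under which $\mathbb D_{\mathsf J}$ is itself finite. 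With the termwise inequality in hand, summation over the finite index set is immediate and completes the proof.
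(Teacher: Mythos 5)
Your proof is correct, and it takes a genuinely different route from the paper's. You prove the inequality \emph{termwise}: for each event, $(p-q)^2 \leq (p-q)\log(p/q)$, obtained from the elementary bound $\log x \geq 1 - 1/x$ together with the fact that a probability mass satisfies $p \leq 1$; your symmetry reduction to $p \geq q$ and your handling of zero masses (assume full support, else $\mathbb D_{\mathsf J}$ is infinite and the claim is trivial) are both sound. The paper argues globally instead: $\mathbb D_{\mathsf{EC}}(P,Q) = \lVert P - Q \rVert_2^2 \leq \lVert P - Q \rVert_1^2 = 4\,\mathbb D_{\mathsf{TV}}(P,Q)^2 = 2\bigl(\mathbb D_{\mathsf{TV}}(P,Q)^2 + \mathbb D_{\mathsf{TV}}(Q,P)^2\bigr)$, and then Pinsker's inequality applied in each direction gives $\leq \mathbb D_{\mathsf{KL}}(P \,\Vert\, Q) + \mathbb D_{\mathsf{KL}}(Q \,\Vert\, P) = \mathbb D_{\mathsf J}(P,Q)$. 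Your approach buys elementarity (no appeal to Pinsker) and a stronger, per-event conclusion; it also pinpoints exactly where the probability structure enters --- your remark that the termwise bound fails for general positive reals and hinges on $p \leq 1$ is precisely the right thing to flag, whereas in the paper the probability structure enters only through Pinsker's inequality. What the paper's route buys is modularity: its intermediate inequality $\mathbb D_{\mathsf{EC}}(P,Q) \leq 4\,\mathbb D_{\mathsf{TV}}(P,Q)^2$ is exactly the paper's Lemma 2, so a single chain of standard inequalities ($\ell_2$--$\ell_1$ comparison, total variation, Pinsker) dispatches both lemmas at once and situates Euclidean Confusion relative to total variation distance, which the surrounding discussion in the paper makes use of.
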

\begin{proof}
This follows from Pinsker's Inequality and the relationship between $\ell_1$ and $\ell_2$ norms. Complete proof is provided in the supplementary material.
\end{proof}
By Lemma~\ref{lemma_1}, we can see that the Euclidean Confusion is a conservative estimate for Jeffrey's divergence, the earlier proposed divergence measure. For finite probability spaces, the Total Variation Distance $\mathbb D_{\mathsf {TV}}(P, Q)^2 = \frac{1}{2} \lVert P - Q \rVert_1$ is also a measure of interest. However, due to its non-differentiable nature, it is unsuitable for our case. Nevertheless, we can relate the Euclidean Confusion and Total Variation Distance by the following result.
\begin{lemma}
On a finite probability space, the Euclidean Confusion $\mathbb D_{\mathsf{EC}}(P, Q)$ is bounded by $4\mathbb D_{\mathsf{TV}}(P, Q)^2$ for probability measures $P, Q$.
\end{lemma}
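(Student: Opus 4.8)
The plan is to reduce the claim entirely to the elementary comparison between the $\ell_2$ and $\ell_1$ norms on $\mathbb R^N$, applied to the difference vector $P - Q$. Writing $v = P - Q \in \mathbb R^N$, the two quantities in the statement unpack as $\mathbb D_{\mathsf{EC}}(P,Q) = \lVert v \rVert_2^2$ and, using the standard identity $\mathbb D_{\mathsf{TV}}(P,Q) = \tfrac12 \lVert v \rVert_1$, as $4\mathbb D_{\mathsf{TV}}(P,Q)^2 = \lVert v \rVert_1^2$. So the entire content of the lemma is the inequality $\lVert v \rVert_2^2 \le \lVert v \rVert_1^2$, valid for every finite-dimensional real vector.

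First I would establish that norm comparison. The quickest route is to expand the square of the $\ell_1$ norm:
\begin{equation}
\lVert v \rVert_1^2 = \Big( \sum_{i=1}^N |v_i| \Big)^2 = \sum_{i=1}^N v_i^2 + \sum_{i \ne j} |v_i|\,|v_j| = \lVert v \rVert_2^2 + \sum_{i \ne j} |v_i|\,|v_j|.
\end{equation}
Since every cross term $|v_i|\,|v_j|$ is nonnegative, the trailing sum is nonnegative, giving $\lVert v \rVert_2^2 \le \lVert v \rVert_1^2$ immediately. Equivalently, one can simply invoke the monotonicity of $\ell_p$ norms, $\lVert v \rVert_2 \le \lVert v \rVert_1$, which is the same statement.

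Then I would chain the pieces together, substituting $v = P - Q$:
\begin{equation}
\mathbb D_{\mathsf{EC}}(P,Q) = \lVert P - Q \rVert_2^2 \le \lVert P - Q \rVert_1^2 = \big(2\,\mathbb D_{\mathsf{TV}}(P,Q)\big)^2 = 4\,\mathbb D_{\mathsf{TV}}(P,Q)^2,
\end{equation}
which is exactly the desired bound.

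There is essentially no hard step here: unlike Lemma~\ref{lemma_1}, this bound needs only the one-sided comparison $\lVert \cdot \rVert_2 \le \lVert \cdot \rVert_1$ and does \emph{not} require Pinsker's inequality. The only thing worth being careful about is bookkeeping the factor of two between $\mathbb D_{\mathsf{TV}}$ and $\lVert \cdot \rVert_1$ so that the constant $4$ emerges exactly, and fixing the convention for the total variation distance so that $2\,\mathbb D_{\mathsf{TV}}(P,Q) = \lVert P - Q \rVert_1$ holds as written.
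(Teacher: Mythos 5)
Your proof is correct and takes essentially the same route as the paper's: both reduce the claim to the norm comparison $\lVert v \rVert_2 \leq \lVert v \rVert_1$ applied to $v = P - Q$, combined with the identity $\mathbb{D}_{\mathsf{TV}}(P,Q) = \tfrac{1}{2}\lVert P - Q \rVert_1$ on a finite alphabet. The only difference is cosmetic: you additionally justify the $\ell_1$--$\ell_2$ comparison by expanding the square, whereas the paper cites it as a standard fact.
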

\begin{proof}
This follows directly from the relationship between $\ell_1$ and $\ell_2$ norms. Complete proof is provided in the supplementary material.
\end{proof}

\subsection{Euclidean Confusion for Point Sets\label{sec:3.3}}
In a standard classification setting with $N$ classes, we consider a training set with $m = \sum_{i=1}^N m_i$ training examples, where $m_i$ denotes the number of training samples for class $i$. For this setting, we can write the total Euclidean Confusion between points of classes $i$ and $j$ as the average of the Euclidean Confusion between all pairs of points belonging to those two classes. For simplicity of notation, let us denote the set of conditional probability distributions of all training points belonging to class $i$ for a model parameterized by $\theta$ as $\mathcal S_i = \{p_\theta(\mathbf y|\mathbf x^i_1), p_\theta(\mathbf y|\mathbf x^i_2), ..., p_\theta(\mathbf y|\mathbf x^i_{m_i})\}$. Then, for a model parameterized by $\theta$, the Euclidean Confusion is given by:
{\footnotesize
\begin{align}
\mathbb D_{\mathsf{EC}}(\mathcal S_i, \mathcal S_j; \theta) \triangleq \frac{1}{m_im_j} \Big(\sum_{u,v}^{m_i,m_j} \mathbb D_{\mathsf{EC}}(p_\theta(\mathbf y |\mathbf x^i_u), p_\theta(\mathbf y|\mathbf x^j_v)) \Big)
\end{align}
}%
We can simplify this equation by assuming an equal number of points $n$ per class:
\begin{equation}
\mathbb D_{\mathsf{EC}}(\mathcal S_i, \mathcal S_j; \theta) = \frac{1}{n^2}\Big(\sum_{u,v}^{n,n} \lVert p_\theta(\mathbf y |\mathbf x^i_u) - p_\theta(\mathbf y|\mathbf x^j_v) \rVert_2^2 \Big) \label{eqn:set_ec}
\end{equation}
This form of the Euclidean Confusion between the two sets of points gives us an interesting connection with another popular distance metric over probability distributions, known as the \textbf{Energy Distance}~\cite{szekely2013energy}.

Introduced by Gabor Szekely~\cite{szekely2013energy}, the \textbf{Energy Distance} $\mathbb D_{\mathsf{EN}}(F,G)$ between two cumulative probability distribution functions $F$ and $G$ with random vectors $X$ and $Y$ in $\mathbb R^N$ can be given by
\begin{equation}
\mathbb D_{\mathsf{EN}}(F,G)^2 \triangleq 2\mathbb E \lVert X - Y\rVert - \mathbb E\lVert X - X'\rVert -\mathbb E\lVert Y - Y'\rVert \geq 0
\end{equation}
where $(X,X',Y,Y')$ are independent, and $X \sim F, X' \sim F, Y \sim G, Y' \sim G$. If we consider the sets $\mathcal S_i$ and $\mathcal S_j$, with a uniform probability of selecting any of the $n$ points in each of these sets, then we obtain the following results.
\begin{lemma}
  For sets $\mathcal S_i$, $\mathcal S_j$ and $\mathbb D_{\mathsf{EC}}(\mathcal S_i, \mathcal S_j; \theta)$ as defined in Equation~(\ref{eqn:set_ec}):
  \begin{equation*}
    \tfrac{1}{2}\mathbb D_{\mathsf{EN}}(\mathcal S_i, \mathcal S_j; \theta)^2 \leq \mathbb D_{\mathsf{EC}}(\mathcal S_i, \mathcal S_j; \theta)
  \end{equation*}
  where $\mathbb D_{\mathsf{EN}}(\mathcal S_i, \mathcal S_j; \theta)$ is the Energy Distance under Euclidean norm between $\mathcal S_i$ and $\mathcal S_j$ (parameterized by $\theta$), and random vectors are selected with uniform probability in both $\mathcal S_i$ and $\mathcal S_j$.
\end{lemma}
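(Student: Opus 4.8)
The plan is to unfold the Energy Distance over the two finite point sets and compare it directly with the Euclidean Confusion of Equation~(\ref{eqn:set_ec}). Since the random vectors $X,X'$ and $Y,Y'$ are drawn uniformly from $\mathcal S_i$ and $\mathcal S_j$, each expectation collapses into a normalized double sum: writing $p^i_u = p_\theta(\mathbf y\,|\,\mathbf x^i_u)$, I would replace $\mathbb E\lVert X - Y\rVert$ by the between-set sum $\tfrac{1}{n^2}\sum_{u,v}\lVert p^i_u - p^j_v\rVert$, and $\mathbb E\lVert X - X'\rVert$, $\mathbb E\lVert Y - Y'\rVert$ by the two within-set double sums over $\mathcal S_i$ and $\mathcal S_j$. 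This rewrites $\mathbb D_{\mathsf{EN}}(\mathcal S_i,\mathcal S_j;\theta)^2$ as one between-set term minus two within-set terms.

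The decisive step is recognizing that the between-set term is exactly the quantity $\mathbb D_{\mathsf{EC}}(\mathcal S_i,\mathcal S_j;\theta)$ defined in Equation~(\ref{eqn:set_ec}): the cross term $2\,\tfrac{1}{n^2}\sum_{u,v}\lVert p^i_u - p^j_v\rVert^2$ equals $2\,\mathbb D_{\mathsf{EC}}(\mathcal S_i,\mathcal S_j;\theta)$, so that
\begin{equation*}
\tfrac{1}{2}\mathbb D_{\mathsf{EN}}(\mathcal S_i,\mathcal S_j;\theta)^2 = \mathbb D_{\mathsf{EC}}(\mathcal S_i,\mathcal S_j;\theta) - \tfrac{1}{2n^2}\Big(\sum_{u,v}\lVert p^i_u - p^i_v\rVert^2 + \sum_{u,v}\lVert p^j_u - p^j_v\rVert^2\Big).
\end{equation*}
Each within-set sum is a sum of squared norms, hence non-negative, so discarding the parenthesized correction can only raise the right-hand side; this immediately gives $\tfrac{1}{2}\mathbb D_{\mathsf{EN}}^2 \le \mathbb D_{\mathsf{EC}}$, with equality precisely when both sets carry no intra-set spread. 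Equivalently, I could replace the last step by Jensen's inequality for the convex map $z \mapsto \lVert z\rVert_2^2$, since $\tfrac{1}{n^2}\sum_{u,v}\lVert p^i_u - p^j_v\rVert^2 \ge \lVert \bar p_i - \bar p_j\rVert_2^2$ with $\bar p_i,\bar p_j$ the set means, and $\tfrac12\mathbb D_{\mathsf{EN}}^2$ reduces to $\lVert \bar p_i - \bar p_j\rVert_2^2$.

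The hard part will be reconciling the norm convention between the two definitions, and this is the only real subtlety. The between-set term coincides with $\mathbb D_{\mathsf{EC}}$ only when the pairwise distances inside the Energy Distance are taken with the \emph{squared} Euclidean norm, matching the squared-norm definition of Euclidean Confusion; under that reading the within-set terms are non-negative and the bound follows at once by discarding them. If one instead keeps the literal non-squared norm of the stated definition, dropping the non-negative within-set terms only yields $\tfrac12\mathbb D_{\mathsf{EN}}^2 \le \tfrac{1}{n^2}\sum_{u,v}\lVert p^i_u - p^j_v\rVert_2$, and the passage from this average distance to the average squared distance $\mathbb D_{\mathsf{EC}}$ is not automatic---it would require the near--one-hot geometry of softmax outputs, whose pairwise simplex distances approach the diameter $\sqrt{2}>1$ so that $d \le d^2$ holds termwise. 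I would therefore commit to the squared-Euclidean reading, under which the cross term is exactly $\mathbb D_{\mathsf{EC}}$, as the interpretation that makes the factor-$\tfrac12$ inequality hold cleanly and in general.
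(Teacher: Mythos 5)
Your proposal is correct and follows essentially the same route as the paper's own proof: write the Energy Distance under uniform sampling as a between-set (squared-norm) term minus two within-set terms, identify the between-set term with $\mathbb D_{\mathsf{EC}}(\mathcal S_i, \mathcal S_j; \theta)$, and drop the non-negative within-set terms. The norm-convention subtlety you flag is resolved by the paper in exactly the way you commit to---its proof explicitly takes the Energy Distance ``under the squared Euclidean norm distance,'' so the cross term equals $2\,\mathbb D_{\mathsf{EC}}$ and the factor-$\tfrac{1}{2}$ bound follows immediately.
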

\begin{proof}
  This follows from the definition of Energy Distance with uniform probability of sampling. Complete proof is provided in the supplementary material.
\end{proof}
\begin{corollary}
  For sets $\mathcal S_i$, $\mathcal S_j$ and $\mathbb D_{\mathsf{EC}}(\mathcal S_i, \mathcal S_j; \theta)$ as defined in Equation~(\ref{eqn:set_ec}), we have:
  \begin{equation*}
    \mathbb D_{\mathsf{EC}}(\mathcal S_i, \mathcal S_i; \theta) + \mathbb D_{\mathsf{EC}}(\mathcal S_j, \mathcal S_j; \theta) \leq 2\mathbb D_{\mathsf{EC}}(\mathcal S_i, \mathcal S_j; \theta)
  \end{equation*}
  with equality only when $\mathcal S_i = \mathcal S_j$.
\end{corollary}
\begin{proof}
  This follows from the fact that the Energy Distance $\mathbb D_{\mathsf{EN}}(\mathcal S_i, \mathcal S_j; \theta)$ is 0 only when $\mathcal S_i = \mathcal S_j$. The complete version of the proof is included in the supplement.
\end{proof}
With these results, we restrict the behavior of Euclidean Confusion within two well-defined conventional probability distance measures, the Jeffrey's divergence and Energy Distance. One might consider optimizing the Energy Distance directly, due to its similar formulation and the fact that we uniformly sample points during training with SGD. However, the Energy Distance additionally includes the two terms that account for the negative of the average all-pairs distances between points in $\mathcal S_i$ and $\mathcal S_j$ respectively, which we do not want to maximize, since we do not wish to push points within the same class further apart. Therefore, we proceed with our measure of Euclidean Confusion.
\begin{figure*}[t]
\includegraphics[width=\linewidth]{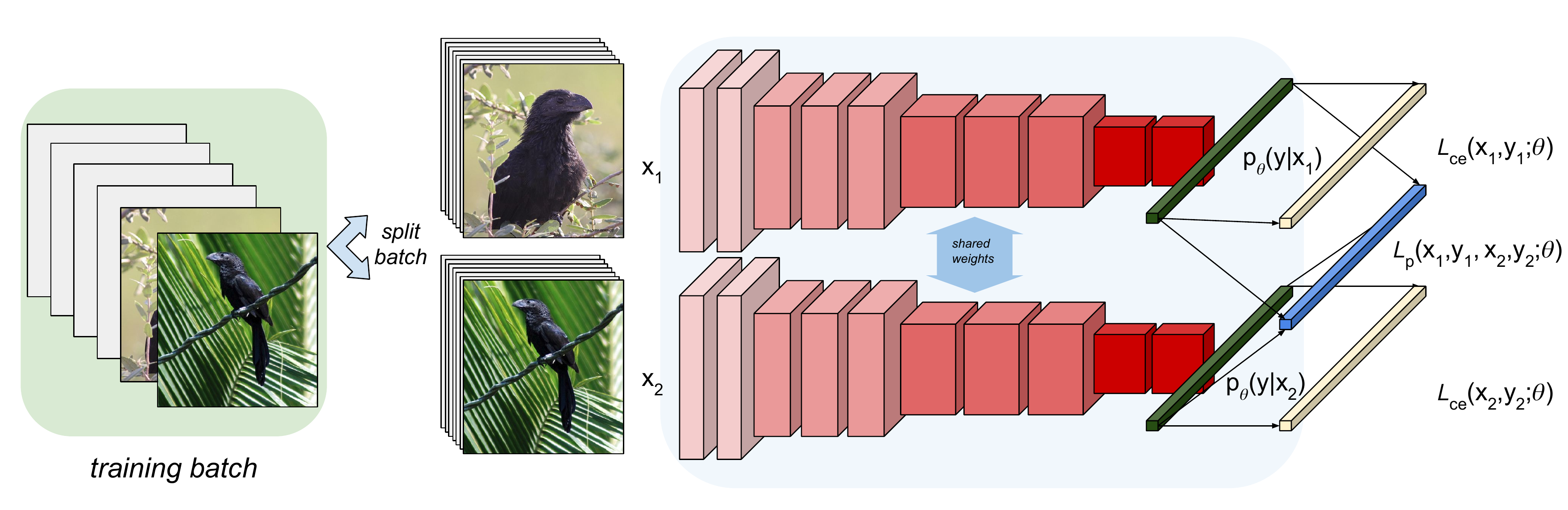}
\caption{CNN training pipeline for Pairwise Confusion (\textbf{PC}). We employ a Siamese-like architecture, with individual cross entropy calculations for each branch, followed by a joint energy-distance minimization loss. We split each incoming batch of samples into two mini-batches, and feed the network pairwise samples.}
\label{fig:model_architecture}
\end{figure*}

\subsection{Learning with Gradient Descent\label{sec:3.4}}
We proceed to learn parameters $\theta^*$ for a neural network, with the following learning objective function for a pair of input points, motivated by the formulation of Euclidean Confusion:
{
\small
\begin{equation}
  \theta^* = \arg\min_{\theta} \sum_{\substack{i=1, j\neq i\\ u, v}}^{\substack{N,N \\ n, n}} \Big[\mathcal L_{\textsf{CE}}(p_\theta(\mathbf y | \mathbf x^i_u), \mathbf y^i_u) + \mathcal L_{\textsf{CE}}(p_\theta(\mathbf y | \mathbf x^j_v), \mathbf y^j_v) + \frac{\lambda}{n^2} \mathbb D_{\mathsf{EC}}(p_\theta(\mathbf y | \mathbf x^j_v), p_\theta(\mathbf y | \mathbf x^i_u)) \Big] \label{optim_equation}
\end{equation}
}%
This objective function can be explained as: for each point in the training set, we randomly select another point from a different class and calculate the individual cross-entropy losses and Euclidean Confusion until all pairs have been exhausted. For each point in the training dataset, there are $n{\cdot}(N-1)$ valid choices for the other point, giving us a total of $n^2{\cdot}N{\cdot}(N-1)$ possible pairs. In practice, we find that we do not need to exhaust all combinations for effective learning using gradient descent, and in fact we observe that convergence is achieved far before all observations are observed. We simplify our formulation instead by using the following procedure described in Algorithm~\ref{alg1}.

{\footnotesize
\begin{algorithm} 
\caption{Training Using Euclidean Confusion} 
\label{alg1} 
\begin{algorithmic} 
	\State Training data $D$, Test data $\hat{D}$, parameters $\theta$, hyperparameters $\hat{\theta}$
	\For{$epoch \in [0,$\textsf{max\textunderscore epochs}]$)$}
		\State $D_1 \Leftarrow \textsf{shuffle}(D)$
		\State $D_2 \Leftarrow \textsf{shuffle}(D)$
    \For{$i \in [0,$\textsf{num\textunderscore batches}$]$}
      \State{$\mathcal L_{\textsf{batch}} = 0$}
      \For{$(d_1, d_2) \in $ \textsf{batch} $i$ of $(D_1, D_2)$}
        \State{$\gamma \Leftarrow 1$ if \textsf{label}$(d_1)\neq$ \textsf{label}$(d_2)$, 0 otherwise}
        \State{$\mathcal L_{\textsf{pair}} \Leftarrow \mathcal L_{\textsf{CE}}(d_1; \theta) + \mathcal L_{\textsf{CE}}(d_2; \theta) + \lambda \cdot \gamma \cdot \mathbb D_{\textsf{EC}}(d_1, d_2; \theta)$}
        \State{$\mathcal L_{\textsf{batch}} \Leftarrow \mathcal L_{\textsf{batch}} + \mathcal L_{\textsf{pair}}$}
      \EndFor
      \State{$\theta \Leftarrow$ \textsf{Backprop}$(\mathcal L_{\textsf{batch}}, \theta, \hat{\theta})$}
    \EndFor
    \State{$\hat\theta \Leftarrow$ \textsf{ParameterUpdate}(\textsf{epoch}, $\hat\theta)$}
	\EndFor
\end{algorithmic}
\end{algorithm}
}%

\subsubsection{Training Procedure:}
As described in Algorithm 1, our learning procedure is a slightly modified version of the standard SGD. We randomly permute the training set twice, and then for each pair of points in the training set, add Euclidean Confusion only if the samples belong to different classes. This form of sampling approximates the exhaustive Euclidean Confusion, with some points with regular gradient descent, which in practice does not alter the performance. Moreover, convergence is achieved after only a fraction of all the possible pairs are observed. Formally, we wish to model the conditional probability distribution $p_\theta(\mathbf y | \mathbf x)$ over the $p$ classes for function $f(\mathbf x ; \theta) = p_\theta(\mathbf y | \mathbf x)$ parameterized by model parameters $\theta$. Given our optimization procedure, we can rewrite the total loss for a pair of points $\mathbf x_1, \mathbf x_2$ with model parameters $\theta$ as:
{
\small
\begin{equation}
  \mathcal L_{\textsf{pair}}(\mathbf x_1, \mathbf x_2, \mathbf y_1, \mathbf y_2; \theta) = \sum_{i=1}^2 [\mathcal L_{\textsf{CE}}(p_\theta(\mathbf y | \mathbf x_i), \mathbf y_i)] + \lambda\gamma(\mathbf y_1, \mathbf y_2)\mathbb D_{\mathsf{EC}}(p_\theta(\mathbf y | \mathbf x_1), p_\theta(\mathbf y | \mathbf x_2))
\end{equation}} where, $\gamma(\mathbf y_1, \mathbf y_2) = 1$ when $\mathbf y_i \neq \mathbf y_j$, and 0 otherwise. We denote \textbf{training} with this general architecture with the term \textit{Pairwise Confusion} or \textbf{PC} for short. Specifically, we train a Siamese-like neural network~\cite{chopra2005learning} with shared weights, training each network individually using cross-entropy, and add the \textbf{Euclidean Confusion} loss between the conditional probability distributions obtained from each network (Figure \ref{fig:model_architecture}). During training, we split an incoming batch of training samples into two parts, and evaluating cross-entropy on each sub-batch identically, followed by a pairwise loss term calculated for corresponding pairs of samples across batches. During {testing}, only one branch of the network is active, and generates output predictions for the input image. As a result, implementing this method does not introduce any significant computational overhead during testing.

\subsubsection{CNN Architectures}
We experiment with VGGNet \cite{simonyan2014very}, GoogLeNet \cite{szegedy2015going}, ResNets \cite{he2016deep}, and DenseNets~\cite{huang2016densely} as base architectures for the Siamese network trained with \textbf{PC} to demonstrate that our method is insensitive to the choice of source architecture.

\section{Experimental Details}
We perform all experiments using Caffe~\cite{jia2014caffe} or PyTorch \cite{pytorch} over a cluster of NVIDIA Titan X, Tesla K40c and  GTX 1080 GPUs. Our code and models are available at \href{https://github.com/abhimanyudubey/confusion}{\texttt{github.com/abhimanyudubey/confusion}}. Next, we provide brief descriptions of the various datasets used in our paper.
\subsection{Fine-Grained Visual Classification (FGVC) datasets}
\begin{enumerate}[wide, labelwidth=!, labelindent=0pt]
	\item \textbf{Wildlife Species Classification}: We experiment with several widely-used FGVC datasets. The Caltech-UCSD Birds (\textbf{CUB-200-2011)} dataset~\cite{wah2011caltech} has 5,994 training and 5,794 test images across 200 species of North-American birds. The \textbf{NABirds} dataset~\cite{van2015building} contains 23,929 training and 24,633 test images across over 550 visual categories, encompassing 400 species of birds, including separate classes for male and female birds in some cases. The \textbf{Stanford Dogs} dataset~\cite{khosla2011novel} has 20,580 images across 120 breeds of dogs around the world. Finally, the \textbf{Flowers-102} dataset~\cite{nilsback2008automated} consists of 1,020 training, 1,020 validation and 6,149 test images over 102 flower types.
	\item \textbf{Vehicle Make/Model Classification}:  We experiment with two common vehicle classification datasets. The \textbf{Stanford Cars} dataset~\cite{krause20133d} contains 8,144 training and 8,041 test images across 196 car classes. The classes represent variations in car {make}, {model}, and {year}. The \textbf{Aircraft} dataset is a set of 10,000 images across 100 classes denoting a fine-grained set of airplanes of different varieties~\cite{maji2013fine}.
\end{enumerate}

{\begin{table*}[t]
\centering
\scriptsize
\setlength\tabcolsep{0.75pt}
\caption{Pairwise Confusion (\textbf{PC}) obtains state-of-the-art performance on six widely-used fine-grained visual classification datasets (A-F). Improvement over the baseline model is reported as $(\Delta)$. All results averaged over 5 trials.\label{tab:fgvc}}
\begin{tabular}{lcc}
\multicolumn{3}{c}{(A) CUB-200-2011} \\ \hline
Method & Top-1 & $\Delta$ \\ \hline
Gao \textit{et al.} \cite{gao2016compact}&84.00 & -\\
STN\cite{jaderberg2015spatial} & {84.10} & - \\
Zhang \textit{et al.} \cite{zhang2016picking} & 84.50 & - \\
Lin \textit{et al.}~\cite{lin2017improved} & 85.80 & - \\
Cui \textit{et al.}~\cite{cui2017kernel} & 86.20 & - \\\hline
ResNet-50 & 78.15 & \multirow{2}{*}{(2.06)} \\
\textbf{PC}-ResNet-50 & 80.21 &  \\\hline
Bilinear CNN~\cite{lin2015bilinear} &84.10 & \multirow{2}{*}{(1.48)} \\
\textbf{PC}-BilinearCNN & 85.58 & \\ \hline
DenseNet-161 & 84.21 & \multirow{2}{*}{(\textbf{2.66})} \\
\textbf{PC}-DenseNet-161 & \textbf{86.87} \\ \hline\\
\end{tabular}
\hspace{2pt}
\begin{tabular}{lcc}
\multicolumn{3}{c}{(B) Cars} \\ \hline
Method & Top-1 & $\Delta$ \\ \hline
Wang \textit{et al.} \cite{Wang_2016_CVPR} & 85.70 & - \\
Liu \textit{et al.} \cite{liu2016hierarchical} & 86.80 & - \\
Lin \textit{et al.}~\cite{lin2017improved} & 92.00 & - \\
Cui \textit{et al.}~\cite{cui2017kernel} & 92.40 & - \\ \hline
ResNet-50 & 91.71 & \multirow{2}{*}{(1.72)}\\
\textbf{PC}-ResNet-50 & \textbf{93.43}  \\ \hline
Bilinear CNN~\cite{lin2015bilinear} &91.20 & \multirow{2}{*}{(\textbf{1.25})}\\
\textbf{PC}-Bilinear CNN & 92.45  \\ \hline
DenseNet-161 & 91.83 & \multirow{2}{*}{(1.03)} \\
\textbf{PC}-DenseNet-161 & 92.86 &  \\ \hline\\
\end{tabular}
\hspace{2pt}
\begin{tabular}{lcc}
\multicolumn{3}{c} {(C) Aircrafts} \\ \hline
Method & Top-1 & $\Delta$ \\ \hline
Simon \textit{et al.} \cite{simon2017generalized} & 85.50 & -\\
Cui \textit{et al.}~\cite{cui2017kernel} & 86.90 & - \\
LRBP \cite{kong2016low} & 87.30 & -\\
Lin \textit{et al.}~\cite{lin2017improved} & 88.50 & - \\ \hline
ResNet-50 & 81.19 & \multirow{2}{*}{(2.21)} \\
\textbf{PC}-ResNet-50 & 83.40  \\ \hline
BilinearCNN~\cite{lin2015bilinear} &84.10 & \multirow{2}{*}{(1.68)}\\
\textbf{PC}-BilinearCNN & 85.78 \\ \hline
DenseNet-161 &86.30 & \multirow{2}{*}{(\textbf{2.94})} \\
\textbf{PC}-DenseNet-161 & \textbf{89.24} \\ \hline\\
\end{tabular}
\begin{tabular}{lcc}
\multicolumn{3}{c}{(D) NABirds} \\ \hline
Method & Top-1 & $\Delta$ \\ \hline
Branson \textit{et al.}~\cite{branson2014bird} & 35.70 & - \\
Van \textit{et al.} \cite{van2015building} & 75.00 & -\\ \hline 
ResNet-50~& 63.55 & \multirow{2}{*}{(\textbf{4.60})} \\
\textbf{PC}-ResNet-50 & 68.15 & \\  \hline
BilinearCNN~\cite{lin2015bilinear} & 80.90 & \multirow{2}{*}{(1.11)} \\
\textbf{PC}-BilinearCNN & 82.01 & \\ \hline
DenseNet-161 & 79.35 & \multirow{2}{*}{(3.44)}\\
\textbf{PC}-DenseNet-161 & \textbf{82.79} \\ \hline
\end{tabular}
\hspace{1pt}
\begin{tabular}{lcc}
\multicolumn{3}{c} {(E) Flowers-102} \\ \hline
Method & Top-1 & $\Delta$ \\ \hline
Det.+Seg. \cite{angelova2013efficient} & 80.66 & -\\
Overfeat\cite{Razavian_2014_CVPR_Workshops} & 86.80 & -\\ \hline
ResNet-50 & 92.46 & \multirow{2}{*}{(1.04)} \\
\textbf{PC}-ResNet-50 & 93.50  \\ \hline
BilinearCNN~\cite{lin2015bilinear} &92.52 & \multirow{2}{*}{(1.13)}\\
\textbf{PC}-BilinearCNN & \textbf{93.65} \\\hline
DenseNet-161 & 90.07 & \multirow{2}{*}{(\textbf{1.32})}\\
\textbf{PC}-DenseNet-161 &91.39 & \\ \hline
\end{tabular}
\hspace{1pt}
\begin{tabular}{lcc}
\multicolumn{3}{c}{(F) Stanford Dogs} \\ \hline
Method & Top-1 & $\Delta$ \\ \hline
Zhang \textit{et al.} \cite{zhang2016weakly} & 80.43 & - \\
Krause \textit{et al.} \cite{krause2016unreasonable} & 80.60 & - \\ \hline
ResNet-50 & 69.92 &  \multirow{2}{*}{(\textbf{3.43})}\\
\textbf{PC}-ResNet-50 & 73.35  \\\hline
BilinearCNN~\cite{lin2015bilinear} &82.13 & \multirow{2}{*}{(0.91)}\\
\textbf{PC}-BilinearCNN & 83.04 \\  \hline
DenseNet-161 & 81.18 &  \multirow{2}{*}{(2.57)}\\
\textbf{PC}-DenseNet-161 & \textbf{83.75} & \\ \hline
\end{tabular}
\end{table*}

These datasets contain (i) large visual diversity in each class~\cite{nilsback2008automated,wah2011caltech,khosla2011novel}, (ii) visually similar, often confusing samples belonging to different classes, and (iii) a large variation in the number of samples present per class, leading to greater class imbalance than  LSVC datasets like \textbf{ImageNet}~\cite{imagenet_cvpr09}. Additionally, some of these datasets have densely annotated part information available, which we do not utilize in our experiments.

\section{Results}
\label{sec:results}
\subsection{Fine-Grained Visual Classification}
We first describe our results on the six FGVC datasets from Table~\ref{tab:fgvc}. In all experiments, we average results over 5 trials per experiment---after choosing the best value of hyperparameter $\lambda$. Please see the supplementary material for  mean and standard deviation values for all experiments.
\begin{enumerate}[wide, labelwidth=!, labelindent=0pt]
\item \textbf{Fine-tuning from Baseline Models}: We fine-tune from three baseline models using the PC optimization procedure: ResNet-50~\cite{he2016deep}, Bilinear CNN~\cite{lin2015bilinear}, and DenseNet-161~\cite{huang2016densely}. As Tables~\ref{tab:fgvc}-(A-F) show, PC obtains substantial improvement across all datasets and models. For instance, a baseline DenseNet-161 architecture obtains an average accuracy of 84.21\%, but \textbf{PC}-DenseNet-161 obtains an accuracy of 86.87\%, an improvement of \textbf{2.66\%}. On NABirds, we obtain improvements of \textbf{4.60\%} and \textbf{3.42\%} over baseline ResNet-50 and DenseNet-161 architectures.

\item \textbf{Combining PC with Specialized FGVC models}: Recent work in FGVC has proposed several novel CNN designs that take part-localization into account, such as bilinear pooling techniques~\cite{gao2016compact,lin2015bilinear,cui2017kernel} and spatial transformer networks~\cite{jaderberg2015spatial}. We train a Bilinear CNN~\cite{lin2015bilinear} with PC, and obtain an average improvement of 1.7\% on the 6 datasets.
\end{enumerate}
We note two important aspects of our analysis: (1) we do not compare with ensembling and data augmentation techniques such as Boosted CNNs~\cite{Moghimi2016} and Krause, \textit{et al.}~\cite{krause2016unreasonable} since prior evidence indicates that these techniques invariably improve performance, and (2) we evaluate a single-crop, single-model evaluation without any part- or object-annotations, and perform competitively with methods that use both augmentations.
\begin{figure}[t]
\centering
\begin{minipage}[t]{.45\textwidth}
\centering
\includegraphics[width=\linewidth]{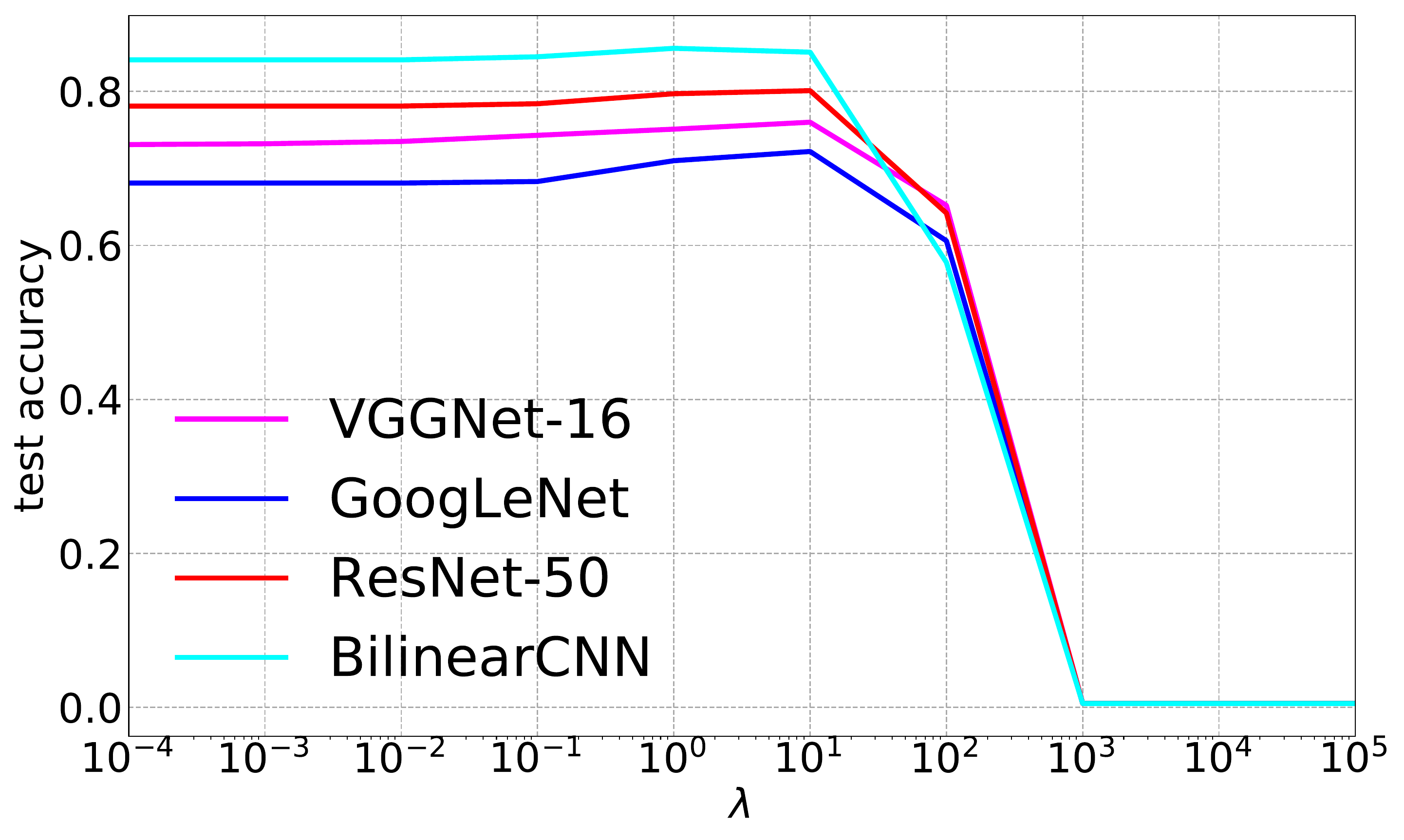}
\end{minipage}
\begin{minipage}[t]{.45\textwidth}
\centering
\includegraphics[width=\linewidth]{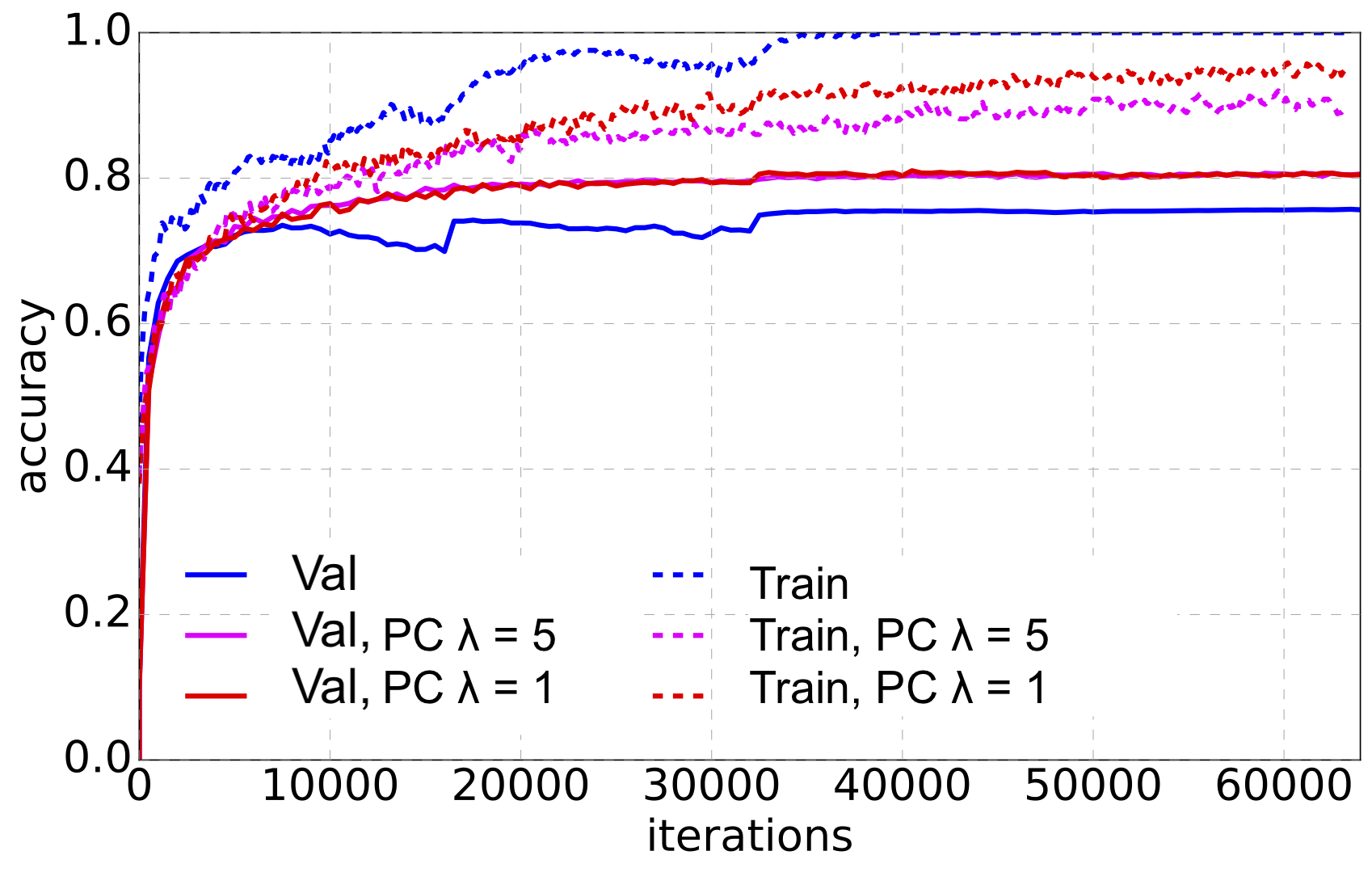}
\end{minipage}
\caption{(left) Variation of test accuracy on CUB-200-2011 with logarithmic variation in hyperparameter $\lambda$. (right) Convergence plot of GoogLeNet on CUB-200-2011.}
\label{fig:lambda_cub}
\end{figure}

\textbf{Choice of Hyperparameter $\lambda$:} Since our formulation requires the selection of a hyperparameter $\lambda$, it is important to study the sensitivity of classification performance to the choice of $\lambda$. We conduct this experiment for four different models: GoogLeNet~\cite{szegedy2015going}, ResNet-50~\cite{he2016deep} and VGGNet-16~\cite{simonyan2014very} and Bilinear-CNN~\cite{lin2015bilinear} on the CUB-200-2011 dataset. PC's performance is \textbf{not very sensitive to the choice of $\lambda$} (Figure~\ref{fig:lambda_cub} and Supplementary Tables S1-S5). For all six datasets, the $\lambda$ value is typically between the range [10,20]. On Bilinear CNN, setting $\lambda = 10$ for all datasets gives average performance within 0.08\% compared to the reported values in Table~\ref{tab:fgvc}. In general, PC obtains optimum performance in the range of $0.05N$ and $0.15N$, where N is the number of classes.

\subsection{Additional Experiments}
Since our method aims to improve classification performance in FGVC tasks by introducing confusion in output logit activations, we would expect to see a larger improvement in datasets with higher inter-class similarity and intra-class variation. To test this hypothesis, we conduct two additional experiments.

In the first experiment, we construct two subsets of {ImageNet-1K}~\cite{imagenet_cvpr09}. The first dataset, \textbf{ImageNet-Dogs} is a subset consisting only of species of dogs (117 classes and 116K images). The second dataset, \textbf{ImageNet-Random} contains randomly selected classes from {ImageNet-1K}. Both datasets contain equal number of classes (117) and images (116K), but ImageNet-Dogs has much higher inter-class similarity and intra-class variation, as compared to ImageNet-Random. To test repeatability, we construct 3 instances of Imagenet-Random, by randomly choosing a different subset of ImageNet with 117 classes each time. For both experiments, we randomly construct a 80-20 train-val split from the training data to find optimal $\lambda$ by cross-validation, and report the performance on the unseen ImageNet validation set of the subset of chosen classes. In Table~\ref{tab:ablation}, we compare the performance of training from scratch with- and without-PC across three models: GoogLeNet, ResNet-50, and DenseNet-161. As expected, PC obtains a larger gain in classification accuracy (1.45\%) on ImageNet-Dogs as compared to the {ImageNet-Random} dataset($0.54\% \pm 0.28$).

In the second experiment, we utilize the CIFAR-10 and CIFAR-100 datasets, which contain the same number of total images. CIFAR-100 has $10\times$ the number of classes and $10\%$ of images per class as CIFAR-10 and contains larger inter-class similarity and intra-class variation.  We train networks on both datasets from scratch using default train-test splits (Table~\ref{tab:ablation}). As expected, we obtain larger average gains of 1.77\% on CIFAR-100, as compared to 0.20\% on CIFAR-10. Additionally, when training with $\lambda=10$ on the entire ImageNet dataset, we obtain a top-1 accuracy of $76.28\%$ (compared to a baseline of $76.15\%$), which is a smaller improvement, which is in line with what we would expect for a large-scale image classification problem with large inter-class variation.

Moreover, while training with PC, we observe that the rate of convergence is always similar to or faster than training without PC. For example, a GoogLeNet trained on CUB-200-2011 (Figure~\ref{fig:lambda_cub}(right) above) shows that PC converges to higher validation accuracy faster than normal training using identical learning rate schedule and batch size. Note that the training accuracy is reduced when training with PC, due to the regularization effect. In sum, classification problems that have large intra-class variation and high inter-class similarity benefit from optimization with pairwise confusion. The improvement is even more prominent when training data is limited.


\begin{table}[t]
\centering
\small
\caption{Experiments with ImageNet and CIFAR show that datasets with large intra-class variation and high inter-class similarity benefit from optimization with Pairwise Confusion. Only the mean accuracy over 3 Imagenet-Random experiments is shown.\label{tab:ablation}}
\scalebox{1}{
\begin{tabular}{l|c|c|c|c|c|c|c|c} \hline \hline
\multirow{2}{*}{Network}  & \multicolumn{2}{c|}{{ImageNet-Random}} & \multicolumn{2}{c|}{{ImageNet-Dogs}} &  \multicolumn{2}{c|}{{CIFAR-10}} & \multicolumn{2}{c}{{CIFAR-100}} \\ \cline{2-9}
& Baseline & {PC} & Baseline & {PC} & Baseline & {PC} & Baseline & {PC} \\ \hline
GoogLeNet~\cite{szegedy2015going}  & 71.85 & 72.09 & 62.35 & 64.17 &  86.63 & 87.02 & 73.35 & 76.02\\
ResNet-50~\cite{he2016deep} & 82.01 & 82.65 & 73.81 & 75.92 & 93.17 & 93.46 & 72.16 & 73.14 \\
DenseNet-161~\cite{huang2016densely}  & 78.34 & 79.10 & 70.15 & 71.44 & 95.15 & 95.08 & 78.60 & 79.56\\
\hline \hline
\end{tabular}}
\end{table}

\begin{table}[t]
\centering
\small
\caption{Pairwise Confusion (\textbf{PC}) improves localization performance in fine-grained visual classification tasks. On the CUB-200-2011 dataset, PC obtains an average improvement of 3.4\% in Mean Intersection-over-Union (IoU) for Grad-CAM bounding boxes for each of the five baseline models.\label{tab:iou_gradcam}}
\begin{tabular}{l|c|c|c|c|c}
\hline \hline
Method               & GoogLeNet & VGG-16 & ResNet-50 & DenseNet-161 & Bilinear-CNN \\\hline
Mean IoU (Baseline)  & 0.29      & 0.31   & 0.32      & 0.34         & 0.37         \\
Mean IoU (PC) - Ours & 0.35      & 0.34   & 0.35      & 0.37         & 0.39      \\\hline\hline
\end{tabular}
\end{table}

\begin{figure}[t]
\centering
\small
\includegraphics[width=\textwidth]{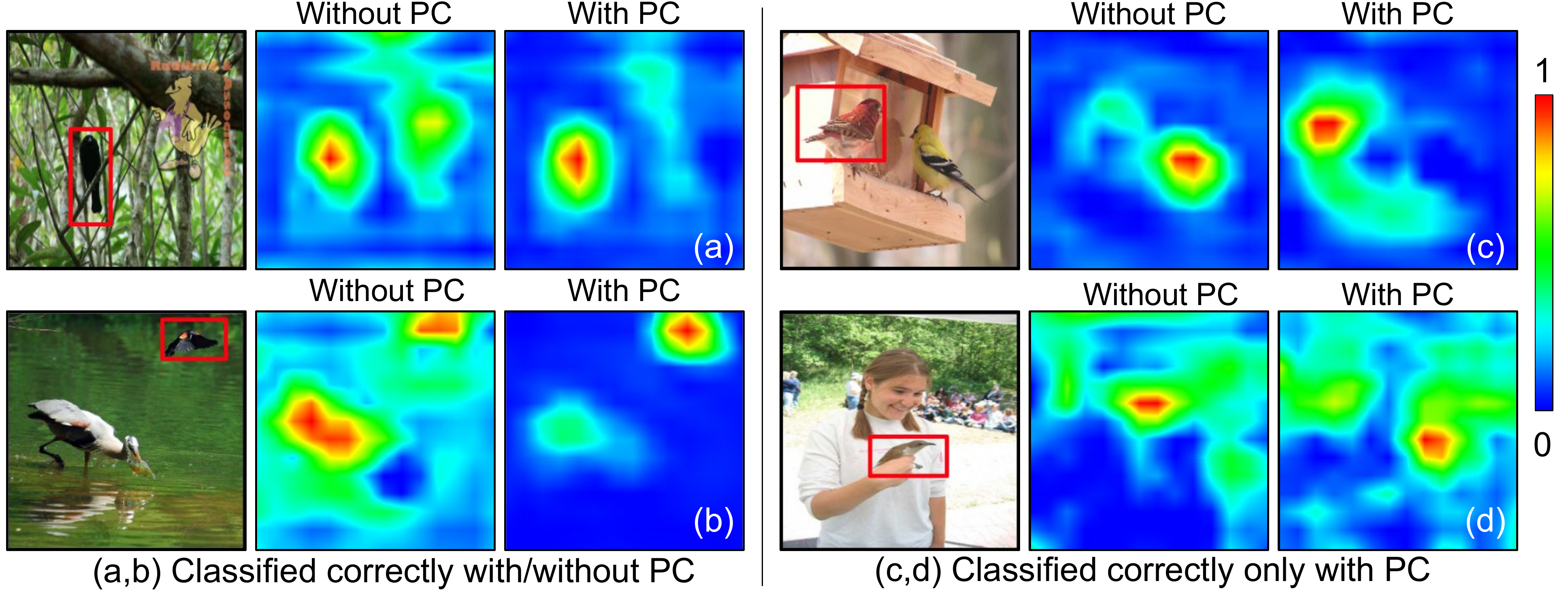}
\caption{Pairwise Confusion (\textbf{PC}) obtains improved localization performance, as demonstrated here with Grad-CAM heatmaps of the CUB-200-2011 dataset images (left) with a VGGNet-16 model trained without PC (middle) and with PC (right). The objects in (a) and (b) are correctly classified by both networks, and (c) and (d) are correctly classified by {PC}, but not the baseline network (VGG-16). For all cases, we consistently observe a tighter and more accurate localization with PC, whereas the baseline VGG-16 network  often latches on to artifacts, even while making correct predictions.}
\label{fig:grad_cam}
\end{figure}
\subsection{Improvement in Localization Ability}
Recent techniques for improving classification performance in fine-grained recognition are based on summarizing and extracting dense localization information in images~\cite{lin2015bilinear,jaderberg2015spatial}. Since our technique increases classification accuracy, we wish to understand if the improvement is a result of enhanced CNN localization abilities due to PC. To measure the regions the CNN localizes on, we utilize Gradient-Weighted Class Activation Mapping (Grad-CAM)~\cite{selvaraju2016grad}, a method that provides a heatmap of visual saliency as produced by the network. We perform both quantitative and qualitative studies of localization ability of PC-trained models.\\
\textbf{Overlap in Localized Regions:} To quantify the improvement in localization due to PC, we construct bounding boxes around object regions obtained from Grad-CAM, by thresholding the heatmap values at 0.5, and choosing the largest box returned. We then calculate the mean IoU (intersection-over-union) of the bounding box with the provided object bounding boxes for the CUB-200-2011 dataset. We compare the mean IoU across several models, with and without PC. As summarized in Table~\ref{tab:iou_gradcam}, we observe an average 3.4\% improvement across five different networks, implying better localization accuracy.\\
\textbf{Change in Class-Activation Mapping:} To qualitatively study the improvement in localization due to PC, we obtain samples from the CUB-200-2011 dataset and visualize the localization regions returned from Grad-CAM for both the baseline and PC-trained VGG-16 model. As shown in Figure~\ref{fig:grad_cam}, PC models provide tighter, more accurate  localization around the target object, whereas sometimes the baseline model has localization driven by image artifacts. Figure~\ref{fig:grad_cam}-(a) has an example of the types of distractions that are often present in FGVC images (the cartoon bird on the right). We see that the baseline VGG-16 network pays significant attention to the distraction, despite making the correct prediction. With {PC}, we find that the attention is limited almost exclusively to the correct object, as desired. Similarly for Figure~\ref{fig:grad_cam}-(b), we see that the baseline method latches on to the incorrect bird category, which is corrected by the addition of {PC}. In Figures~\ref{fig:grad_cam}-(c-d), we see that the baseline classifier makes incorrect decisions due to poor localization, mistakes that are resolved by {PC}.

\section{Conclusion}
In this work, we introduce Pairwise Confusion (PC), an optimization procedure to improve generalizability in fine-grained visual classification (FGVC) tasks by encouraging confusion in output activations. PC improves FGVC performance for a wide class of convolutional architectures while fine-tuning. Our experiments indicate that PC-trained networks show improved localization performance which contributes to the gains in classification accuracy. PC is easy to implement, does not need excessive tuning during training, and does not add significant overhead during test time, in contrast to methods that introduce complex localization-based pooling steps that are often difficult to implement and train. Therefore, our technique should be beneficial to a wide variety of specialized neural network models for applications that demand for fine-grained visual classification or learning from limited labeled data.

\textbf{Acknowledgements:} We would like to thank Dr. Ashok Gupta for his guidance on bird recognition, and Dr. Sumeet Agarwal, Spandan Madan and Ishaan Grover for their feedback at various stages of this work.

\newpage

\title{Pairwise Confusion for Fine-Grained Visual Classification : Supplementary Material}

\titlerunning{Pairwise Confusion : Supplementary Material}
%
\author{Abhimanyu Dubey\inst{1} \and
Otkrist Gupta\inst{1} \and
Pei Guo\inst{2} \and
Ramesh Raskar\inst{1} \and
Ryan Farrell\inst{2} \and
Nikhil Naik\inst{1,3}
}
%
\authorrunning{A. Dubey, O. Gupta, P. Guo, R. Raskar, R. Farrell and N. Naik}
%

\institute{Massachusetts Institute of Technology, Cambridge MA 02139, USA \\
\email{\{dubeya,otkrist,raskar,naik\}@mit.edu}\and
Brigham Young University, Provo UT 84602, USA \\
\email{peiguo, farrell@cs.byu.edu} \and
Harvard University, Cambridge MA 02139, USA\\
\email{naik@fas.harvard.edu}}
\maketitle

\renewcommand{\thetable}{S\arabic{table}}
\renewcommand{\thefigure}{S\arabic{figure}}
\renewcommand{\thesection}{S\arabic{section}}

\section{Proofs for Lemmas from Section 3 in Main Text}
\subsection{Equation 4 from Main Text (Behavior of Jeffrey's Divergence)}
Consider Jeffrey's divergence with $N=2$ classes, and that $\mathbf x_1$ belongs to class 1, and $\mathbf x_2$ belongs to class 2. For a model with parameters $\theta$ that  correctly identifies both $\mathbf x_1$ and $\mathbf x_2$ by training using cross-entropy loss, $p_\theta(\mathbf y_1 | x_1) = 1 - \delta_1$ and $p_\theta(\mathbf y_2 | x_2) = 1 - \delta_2$, where $0 < \delta_1, \delta_2 < \frac{1}{2}$ (since the classifier outputs correct predictions for the input images), we get:
\begin{align}
\mathbb D_{\mathsf J} (p_\theta(\mathbf y | \mathbf x_1), p_\theta(\mathbf y | \mathbf x_2)) &= \begin{multlined}[t](1 - \delta_1 -\delta_2)\cdot(\log(\frac{(1-\delta_1)}{\delta_2})) \\ + (\delta_1 - 1 + \delta_2)\cdot(\log(\frac{\delta_1}{(1 - \delta_2)})) \end{multlined}\\
&= \begin{multlined}[t](1 - \delta_1 -\delta_2)\cdot(\log(\frac{(1-\delta_1)}{\delta_2})) \\ + (1 - \delta_1 - \delta_2)\cdot(\log(\frac{(1-\delta_2)}{\delta_1})) \end{multlined}\\
&= (1 - \delta_1 -\delta_2)\cdot(\log\frac{(1-\delta_1)(1-\delta_2)}{\delta_1\delta_2}) \\
&\geq (1 - \delta_1 -\delta_2)\cdot(2\log(1-\delta_1-\delta_2) -\log(\delta_1\delta_2)) \label{eqn:jef3}
\end{align}

\subsection{Lemmas 1  and 2 from Main Text (Euclidean Confusion Bounds)}
\begin{lemma}
\label{lemma_1}
On a finite probability space, for probability measures $P, Q$:
\begin{equation*}
\mathbb D_{\mathsf{EC}}(P, Q) \leq \mathbb D_{\mathsf J}(P, Q)
\end{equation*}
where $\mathbb D_{\mathsf J}(P, Q)$ is the Jeffrey's Divergence between $P$ and $Q$.
\end{lemma}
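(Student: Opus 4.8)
The plan is to chain two standard inequalities exactly as the main-text hint suggests: Pinsker's inequality to pass from each one-sided KL divergence to the squared $\ell_1$ distance, followed by the elementary norm comparison $\lVert v \rVert_2 \le \lVert v \rVert_1$ to pass from $\ell_1$ down to $\ell_2$. Writing $\lVert P - Q \rVert_1 = \sum_u |p(u) - q(u)|$, recall that Pinsker's inequality on a finite probability space states
\begin{equation*}
\mathbb D_{\mathsf{KL}}(P \,\|\, Q) \;\ge\; \tfrac{1}{2}\,\lVert P - Q \rVert_1^2,
\end{equation*}
and the same bound holds with $P$ and $Q$ interchanged, since the right-hand side is symmetric in $P$ and $Q$.

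First I would apply Pinsker's inequality to both directions of the KL divergence and add the two resulting bounds. Because each contributes $\tfrac12 \lVert P - Q \rVert_1^2$ and the $\ell_1$ distance is symmetric, this yields
\begin{equation*}
\mathbb D_{\mathsf J}(P, Q) = \mathbb D_{\mathsf{KL}}(P \,\|\, Q) + \mathbb D_{\mathsf{KL}}(Q \,\|\, P) \;\ge\; \lVert P - Q \rVert_1^2 .
\end{equation*}
Next I would invoke the norm inequality. For any finite-dimensional vector $v$ one has $\lVert v \rVert_2 \le \lVert v \rVert_1$, which follows from expanding $\lVert v \rVert_1^2 = \big(\sum_i |v_i|\big)^2 = \sum_i v_i^2 + \sum_{i \ne j} |v_i|\,|v_j| \ge \sum_i v_i^2 = \lVert v \rVert_2^2$, the cross terms being nonnegative. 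Applying this to $v = P - Q$ gives $\lVert P - Q \rVert_2^2 \le \lVert P - Q \rVert_1^2$.

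Finally I would combine the two displays. Since $\mathbb D_{\mathsf{EC}}(P, Q) = \lVert P - Q \rVert_2^2$ by definition, chaining the inequalities gives
\begin{equation*}
\mathbb D_{\mathsf{EC}}(P, Q) = \lVert P - Q \rVert_2^2 \;\le\; \lVert P - Q \rVert_1^2 \;\le\; \mathbb D_{\mathsf J}(P, Q),
\end{equation*}
which is the claim. This argument is short and largely mechanical; the only point requiring care is the \emph{direction} of the norm comparison. One must use $\lVert \cdot \rVert_2 \le \lVert \cdot \rVert_1$ (never the reverse, which would require a dimension-dependent constant), and the squared $\ell_1$ term emerging from the two applications of Pinsker is precisely what matches the squared Euclidean confusion. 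The main obstacle is thus simply ensuring these two ingredients are stated in their correct orientations so the inequalities compose in the right sense.
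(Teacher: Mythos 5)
Your proof is correct and follows essentially the same route as the paper's: both chain the norm comparison $\lVert P - Q \rVert_2^2 \leq \lVert P - Q \rVert_1^2$ with Pinsker's inequality applied in both directions of the KL divergence. The only cosmetic difference is that the paper phrases the intermediate quantity as $4\mathbb D_{\mathsf{TV}}(P,Q)^2$ while you work directly with the squared $\ell_1$ distance, which is the same object.
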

\begin{proof}
By the definition of Euclidean Confusion, we have:
\begin{align}
\mathbb D_{\mathsf{EC}}(P, Q) &=  \sum_{u \in \mathcal U} (p(u) - q(u))^2 \\ \intertext{For a finite-dimensional vector $x$, $\lVert x \rVert_2 \leq \lVert x \rVert_1$, therefore:}
&\leq (\sum_{u \in \mathcal U} | p(u) - q(u) | )^2 \\ \intertext{Since $\mathbb D_{\mathsf{TV}}(P, Q) = \frac{1}{2}(\sum_{u \in \mathcal U} | p(u) - q(u) |)$ for finite alphabet $\mathcal U$, we have:}
&= 4\mathbb D_{\mathsf{TV}}(P, Q)^2 \\ \intertext{Since Total Variation Distance is symmetric, we have:}
&= 2(\mathbb D_{\mathsf{TV}}(P, Q)^2 + \mathbb D_{\mathsf{TV}}(Q, P)^2) \\ \intertext{By Pinsker's Inequality, $\mathbb D_{\mathsf{TV}}(P, Q) \leq \sqrt{\frac{1}{2}\mathbb D_{\mathsf{KL}}(P || Q)}$, and similarly $\mathbb D_{\mathsf{TV}}(Q, P) \leq \sqrt{\frac{1}{2}\mathbb D_{\mathsf{KL}}(Q || P)}$, therefore:}
&\leq 2\Big(\frac{1}{2}\mathbb D_{\mathsf{KL}}(P || Q) + \frac{1}{2}\mathbb D_{\mathsf{KL}}(Q || P)\Big) \\
&= \mathbb D_{\mathsf J}(P, Q)
\end{align}
\end{proof}
\begin{lemma}
On a finite probability space, for probability measures $P, Q$:
\begin{equation*}
\mathbb D_{\mathsf{EC}}(P, Q) \leq 4\mathbb D_{\mathsf{TV}}(P, Q)^2
\end{equation*}
where $\mathbb D_{\mathsf{TV}}$ denotes the total variation distance between $P$ and $Q$.
\end{lemma}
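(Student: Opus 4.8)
The plan is to reduce the statement entirely to the elementary norm comparison $\lVert x\rVert_2 \le \lVert x\rVert_1$ for finite-dimensional vectors, which is exactly the intermediate step already appearing in the proof of Lemma~\ref{lemma_1}. First I would write out the definition of Euclidean Confusion as a squared $\ell_2$ norm, $\mathbb D_{\mathsf{EC}}(P,Q) = \sum_{u\in\mathcal U}(p(u)-q(u))^2 = \lVert p-q\rVert_2^2$, together with the definition of total variation as half an $\ell_1$ norm on a finite alphabet, $\mathbb D_{\mathsf{TV}}(P,Q) = \tfrac12\lVert p-q\rVert_1$. This recasts the target inequality as the purely vectorial claim $\lVert p-q\rVert_2^2 \le \lVert p-q\rVert_1^2$.

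Next I would establish $\lVert x\rVert_2^2 \le \lVert x\rVert_1^2$ directly. Setting $a_u = |p(u)-q(u)| \ge 0$, this is the observation that $\big(\sum_{u} a_u\big)^2 = \sum_{u} a_u^2 + 2\sum_{u<v} a_u a_v \ge \sum_{u} a_u^2$, since every cross term is nonnegative. Finiteness of $\mathcal U$ guarantees that all these sums converge, so the expansion is valid and the comparison holds. Applying this with $x = p-q$ yields $\lVert p-q\rVert_2^2 \le \lVert p-q\rVert_1^2$.

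Finally I would substitute $\lVert p-q\rVert_1 = 2\,\mathbb D_{\mathsf{TV}}(P,Q)$, so that $\lVert p-q\rVert_1^2 = 4\,\mathbb D_{\mathsf{TV}}(P,Q)^2$, and chain the relations to conclude $\mathbb D_{\mathsf{EC}}(P,Q) = \lVert p-q\rVert_2^2 \le \lVert p-q\rVert_1^2 = 4\,\mathbb D_{\mathsf{TV}}(P,Q)^2$. There is no genuine obstacle here: the entire content is the nonnegativity of the cross terms in the expansion of $\lVert\cdot\rVert_1^2$. In fact this bound is precisely the middle line of the Lemma~\ref{lemma_1} derivation, so it could equivalently be extracted verbatim from that argument; I would present it as a standalone one-line consequence of the $\ell_1$--$\ell_2$ comparison for the sake of clarity.
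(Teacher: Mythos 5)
Your proof is correct and follows essentially the same route as the paper's: both reduce the claim to the $\ell_1$--$\ell_2$ comparison $\lVert p-q\rVert_2^2 \le \lVert p-q\rVert_1^2$ and then substitute $\mathbb D_{\mathsf{TV}}(P,Q) = \tfrac{1}{2}\sum_{u\in\mathcal U}|p(u)-q(u)|$ to obtain the factor $4\mathbb D_{\mathsf{TV}}(P,Q)^2$. The only difference is cosmetic: you spell out the norm inequality by expanding the square and noting the nonnegativity of the cross terms, whereas the paper simply cites $\lVert x\rVert_2 \le \lVert x\rVert_1$ for finite-dimensional vectors.
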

\begin{proof}
By the definition of Euclidean Confusion, we have:
\begin{align}
\mathbb D_{\mathsf{EC}}(P, Q) &=  \sum_{u \in \mathcal U} (p(u) - q(u))^2 \\ \intertext{For a finite-dimensional vector $x$, $\lVert x \rVert_2 \leq \lVert x \rVert_1$, therefore:}
&\leq (\sum_{u \in \mathcal U} | p(u) - q(u) | )^2 \\ \intertext{Since $\mathbb D_{\mathsf{TV}}(P, Q) = \frac{1}{2}(\sum_{u \in \mathcal U} | p(u) - q(u) |)$ for finite alphabet $\mathcal U$, we have:}
&= 4\mathbb D_{\mathsf{TV}}(P, Q)^2
\end{align}
\end{proof}

\subsection{Proofs for Lemma 3 and Corollary 1 from the Main Text (Euclidean Confusion over Sets)}
\begin{definition}
In a standard classification setting with $N$ classes, consider a training set with $m = \sum_{i=1}^N m_i$ training examples, where $m_i$ denotes the number of training samples for class $i$. For simplicity of notation, let us denote the set of conditional probability distributions of all training points belonging to class $i$ for a model parameterized by $\theta$ as $\mathcal S_i = \{p_\theta(\mathbf y|\mathbf x^i_1), p_\theta(\mathbf y|\mathbf x^i_2), ..., p_\theta(\mathbf y|\mathbf x^i_{m_i})\}$. Then, for a model parameterized by $\theta$, the Euclidean Confusion is given by:
\begin{align}
\mathbb D_{\mathsf{EC}}(\mathcal S_i, \mathcal S_j; \theta) \triangleq \frac{1}{m_im_j} \Big(\sum_{u,v}^{m_i,m_j} \mathbb D_{\mathsf{EC}}(p_\theta(\mathbf y |\mathbf x^i_u), p_\theta(\mathbf y|\mathbf x^j_v)) \Big) \label{eqn:set_ec}
\end{align}
\end{definition}
\begin{lemma}
For sets $\mathcal S_i$, $\mathcal S_j$ and $\mathbb D_{\mathsf{EC}}(\mathcal S_i, \mathcal S_j; \theta)$ as defined in Equation~(\ref{eqn:set_ec}):
\begin{equation*}
  \tfrac{1}{2}\mathbb D_{\mathsf{EN}}(\mathcal S_i, \mathcal S_j; \theta)^2 \leq \mathbb D_{\mathsf{EC}}(\mathcal S_i, \mathcal S_j; \theta)
\end{equation*}
where $\mathbb D_{\mathsf{EN}}(\mathcal S_i, \mathcal S_j; \theta)$ is the Energy Distance under Euclidean norm between $\mathcal S_i$ and $\mathcal S_j$ (parameterized by $\theta$), and random vectors are selected with uniform probability in both $\mathcal S_i$ and $\mathcal S_j$.
\end{lemma}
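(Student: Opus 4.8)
The plan is to expand the set-level Energy Distance directly from its definition under uniform sampling, recognize the resulting double sums as Euclidean Confusions, and then discard two manifestly non-negative terms. First I would instantiate the Energy Distance for the case at hand, taking the pairwise squared Euclidean distance (i.e. the Euclidean Confusion $\mathbb D_{\mathsf{EC}}(P,Q) = \lVert P - Q\rVert_2^2$) as the underlying pairwise measure, so that
\begin{equation*}
\mathbb D_{\mathsf{EN}}(\mathcal S_i, \mathcal S_j; \theta)^2 = 2\,\mathbb E\,\mathbb D_{\mathsf{EC}}(X,Y) - \mathbb E\,\mathbb D_{\mathsf{EC}}(X,X') - \mathbb E\,\mathbb D_{\mathsf{EC}}(Y,Y'),
\end{equation*}
where $X, X'$ are drawn uniformly from $\mathcal S_i$ and $Y, Y'$ uniformly from $\mathcal S_j$. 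Because each set has $n$ points each sampled with probability $1/n$, every expectation becomes a $1/n^2$-normalized double sum over the point sets.

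The second step is to match these double sums against the set-level definition in Equation~(\ref{eqn:set_ec}). With the normalization lined up exactly, the cross term equals $2\,\mathbb D_{\mathsf{EC}}(\mathcal S_i, \mathcal S_j; \theta)$, while the two self terms are $\mathbb D_{\mathsf{EC}}(\mathcal S_i, \mathcal S_i; \theta)$ and $\mathbb D_{\mathsf{EC}}(\mathcal S_j, \mathcal S_j; \theta)$, yielding the identity
\begin{equation*}
\mathbb D_{\mathsf{EN}}(\mathcal S_i, \mathcal S_j; \theta)^2 = 2\,\mathbb D_{\mathsf{EC}}(\mathcal S_i, \mathcal S_j; \theta) - \mathbb D_{\mathsf{EC}}(\mathcal S_i, \mathcal S_i; \theta) - \mathbb D_{\mathsf{EC}}(\mathcal S_j, \mathcal S_j; \theta).
\end{equation*}

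Finally, since $\mathbb D_{\mathsf{EC}}(\mathcal S_i, \mathcal S_i; \theta)$ and $\mathbb D_{\mathsf{EC}}(\mathcal S_j, \mathcal S_j; \theta)$ are each an average of squared Euclidean norms and hence non-negative, dropping them can only increase the right-hand side, so $\mathbb D_{\mathsf{EN}}(\mathcal S_i, \mathcal S_j; \theta)^2 \le 2\,\mathbb D_{\mathsf{EC}}(\mathcal S_i, \mathcal S_j; \theta)$, which is the claim after dividing by $2$. I do not expect any single step to be genuinely hard; the only real obstacle is bookkeeping, namely ensuring the uniform sampling weights $1/n$ combine into the $1/n^2$ normalization of Equation~(\ref{eqn:set_ec}) so that the three groups of pairwise terms map cleanly onto the cross- and self-Euclidean Confusions. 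It is worth noting that the boxed identity also immediately gives the subsequent corollary, since its left-hand side is non-negative, forcing $\mathbb D_{\mathsf{EC}}(\mathcal S_i, \mathcal S_i; \theta) + \mathbb D_{\mathsf{EC}}(\mathcal S_j, \mathcal S_j; \theta) \le 2\,\mathbb D_{\mathsf{EC}}(\mathcal S_i, \mathcal S_j; \theta)$.
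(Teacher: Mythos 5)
Your proposal is correct and follows essentially the same route as the paper's own proof: both expand the Energy Distance under the squared Euclidean norm with uniform sampling, identify the cross term with $2\,\mathbb D_{\mathsf{EC}}(\mathcal S_i, \mathcal S_j; \theta)$, and drop the two non-negative self-distance terms (which the paper writes as expectations $\mathbb E[\lVert X - X'\rVert_2^2]$ and $\mathbb E[\lVert Y - Y'\rVert_2^2]$ rather than as self-confusions, a purely notational difference). Your closing observation that the boxed identity immediately yields the corollary is also exactly how the paper derives it.
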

\begin{proof}
From the definition of Euclidean Confusion, we have:
\begin{align}
\mathbb D_{\mathsf{EC}}(\mathcal S_i, \mathcal S_j; \theta) &= \frac{1}{m_im_j} \Big(\sum_{u,v}^{m_i,m_j} \mathbb D_{\mathsf{EC}}(p_\theta(\mathbf y |\mathbf x^i_u), p_\theta(\mathbf y|\mathbf x^j_v)) \Big) \\
&= \frac{1}{m_im_j} \Big(\sum_{u,v}^{m_i,m_j} \lVert p_\theta(\mathbf y |\mathbf x^i_u) - p_\theta(\mathbf y|\mathbf x^j_v) \rVert_2^2 \Big) \\ \intertext{Considering $X_i \sim \text{Uniform}(\mathcal S_i)$, then we get:}
&= \frac{1}{m_j} \Big(\sum_{v}^{m_j} \mathbb E[\lVert X_i - p_\theta(\mathbf y|\mathbf x^j_v) \rVert_2^2] \Big) \\ \intertext{Considering $X_j \sim \text{Uniform}(\mathcal S_j)$, we obtain:}
&= \mathbb E [\lVert X_i - X_j \rVert_2^2] \label{eq:dec} \\ \intertext{Under the squared Euclidean norm distance, the Energy Distance can be given by:}
\mathbb D_{\mathsf{EN}}(\mathcal S_i, \mathcal S_j; \theta)^2 &= 2\mathbb E [\lVert X - Y \rVert_2^2] - \mathbb E [\lVert X - X' \rVert_2^2] - \mathbb E [\lVert Y - Y' \rVert_2^2] \\ \intertext{Where random variables $X, X' \sim \mathcal P(\mathcal S_i)$ and $Y, Y' \sim \mathcal P(\mathcal S_j)$. If $\mathcal P(\mathcal S_i) = \text{Uniform}(\mathcal S_i)$, and $\mathcal P(\mathcal S_j) = \text{Uniform}(\mathcal S_j)$, we have by substitution of Equation (\ref{eq:dec}):}
\frac{1}{2}\mathbb D_{\mathsf{EN}}(\mathcal S_i, \mathcal S_j; \theta)^2 &=\mathbb D_{\mathsf{EC}}(\mathcal S_i, \mathcal S_j; \theta) - \frac{1}{2}\Big(\mathbb E [\lVert X - X' \rVert_2^2] + \mathbb E [\lVert Y - Y' \rVert_2^2]\Big) \label{eq:ec_eq} \intertext{Since $\lVert x - y \rVert_2^2 \geq 0 \ \forall x \in \mathcal X, y \in \mathcal Y $; $\mathbb E_{x \sim \mathcal X, y \sim \mathcal Y}[\lVert x - y \rVert_2^2] \geq 0 \ \forall \ $ finite sets $\mathcal X, \mathcal Y $. Therefore, we have:}
\frac{1}{2}\mathbb D_{\mathsf{EN}}(\mathcal S_i, \mathcal S_j; \theta)^2 &\leq \mathbb D_{\mathsf{EC}}(\mathcal S_i, \mathcal S_j; \theta)
\end{align}
\end{proof}
\begin{corollary}
  For sets $\mathcal S_i$, $\mathcal S_j$ and $\mathbb D_{\mathsf{EC}}(\mathcal S_i, \mathcal S_j; \theta)$ as defined in Equation~(\ref{eqn:set_ec}), we have:
  \begin{equation*}
    \mathbb D_{\mathsf{EC}}(\mathcal S_i, \mathcal S_i; \theta) + \mathbb D_{\mathsf{EC}}(\mathcal S_j, \mathcal S_j; \theta) \leq 2\mathbb D_{\mathsf{EC}}(\mathcal S_i, \mathcal S_j; \theta)
  \end{equation*}
  with equality only when $\mathcal S_i = \mathcal S_j$.
\end{corollary}
\begin{proof}
From Equation~(\ref{eq:ec_eq}), we have:
\begin{gather}
\frac{1}{2}\mathbb D_{\mathsf{EN}}(\mathcal S_i, \mathcal S_j; \theta)^2 =\mathbb D_{\mathsf{EC}}(\mathcal S_i, \mathcal S_j; \theta) - \frac{1}{2}\Big(\mathbb E [\lVert X - X' \rVert_2^2] + \mathbb E [\lVert Y - Y' \rVert_2^2]\Big) \\ \intertext{From Equation~(\ref{eq:dec}), we have:}
\mathbb D_{\mathsf{EC}}(\mathcal S_i, \mathcal S_j; \theta) = \mathbb E [\lVert X_i - X_j \rVert_2^2] \\ \intertext{For $\mathcal S_i = \mathcal S_j$, we have with $X_i, X_j \sim \text{Uniform}(\mathcal S_i)$:}
\mathbb D_{\mathsf{EC}}(\mathcal S_i, \mathcal S_i; \theta)= \mathbb E [\lVert X_i - X_j \rVert_2^2] \\ \intertext{Replacing this in Equation~(\ref{eq:ec_eq}), we have with $X, X' \sim \text{Uniform}(\mathcal S_i)$ and $Y, Y' \sim \text{Uniform}(\mathcal S_j)$:}
\frac{1}{2}\mathbb D_{\mathsf{EN}}(\mathcal S_i, \mathcal S_j; \theta)^2 =\mathbb D_{\mathsf{EC}}(\mathcal S_i, \mathcal S_j; \theta) - \frac{1}{2}\Big(\mathbb E [\lVert X - X' \rVert_2^2] + \mathbb E [\lVert Y - Y' \rVert_2^2]\Big) \\
=\mathbb D_{\mathsf{EC}}(\mathcal S_i, \mathcal S_j; \theta) - \frac{1}{2}\Big(\mathbb D_{\mathsf{EC}}(\mathcal S_i, \mathcal S_i; \theta) + \mathbb D_{\mathsf{EC}}(\mathcal S_j, \mathcal S_j; \theta)\Big) \\ \intertext{From Szekely \textit{et al.}~\cite{szekely2013energy}, we know that the Energy Distance $\geq 0$ with equality if and only if $\mathcal S_i = \mathcal S_j$. Thus, we have that:}
\mathbb D_{\mathsf{EC}}(\mathcal S_i, \mathcal S_i; \theta) + \mathbb D_{\mathsf{EC}}(\mathcal S_j, \mathcal S_j; \theta)  \leq 2\mathbb D_{\mathsf{EC}}(\mathcal S_i, \mathcal S_j; \theta)
\end{gather}
With equality only when  $\mathcal S_i = \mathcal S_j$.
\end{proof}

\section{Training Details}
In this section, we describe the process for training with Pairwise Confusion for different base architectures, including the list of hyperparameters using for different datasets.

\textbf{ResNet-50:} In all experiments, we train for 40000 iterations with batch-size 8, with a linear decay of the learning rate from an initial value of 0.1. The hyperparameter for the confusion term for each dataset is given in Table \ref{tab:hyp_resnet}.
\begin{table}[h]
\centering
\begin{tabular}{|l|l|}
\hline
\textbf{Dataset} &  $\lambda$  \\ \hline
CUB2011          & 10                                       \\ \hline
NABirds          & 15                                   \\ \hline
Stanford Dogs    & 10                                     \\ \hline
Cars             & 10                                            \\ \hline
Flowers-102      & 10                                      \\ \hline
Aircraft         & 15                            \\ \hline
\end{tabular}
\caption{Regularization parameter $\lambda$ for ResNet-50 experiments.}
\label{tab:hyp_resnet}
\end{table}

\textbf{Bilinear and Compact Bilinear CNN:} In all experiments, we use the training procedure described by the authors\footnote{\url{https://github.com/gy20073/compact_bilinear_pooling/tree/master/caffe-20160312/examples/compact_bilinear}}. In addition, we repeat the described step 2 without the loss on confusion from the obtained weights after performing Step 2 with the loss, and obtain an additional 0.5 percent gain in performance. The hyperparameter for the confusion term for each dataset is given in Table  \ref{tab:hyp_bilinear}.

\begin{table}[h]
\centering
\begin{tabular}{|l|l|}
\hline
\textbf{Dataset} & $\lambda$  \\ \hline
CUB2011          & 20                            \\ \hline
NABirds          & 20                           \\ \hline
Stanford Dogs    & 10              \\ \hline
Cars             & 10                             \\ \hline
Flowers-102      & 10                           \\ \hline
Aircraft         & 10                           \\ \hline
\end{tabular}
\caption{Regularization parameter $\lambda$ for Bilinear CNN experiments.}
\label{tab:hyp_bilinear}
\end{table}

\textbf{DenseNet-161:} In all experiments, we train for 40000 iterations with batch-size 32, with a linear decay of the learning rate from an initial value of 0.1. The hyperparameter for the confusion term for each dataset is given in Table \ref{tab:hyp_densenet}.
\begin{table}[]
\centering
\begin{tabular}{|l|l|}
\hline
\textbf{Dataset} &  $\lambda$  \\ \hline
CUB2011          & 10                                       \\ \hline
NABirds          & 15                                   \\ \hline
Stanford Dogs    & 10                                     \\ \hline
Cars             & 15                                            \\ \hline
Flowers-102      & 10                                      \\ \hline
Aircraft         & 15                            \\ \hline
\end{tabular}
\caption{Regularization parameter $\lambda$ for DenseNet-161 experiments.}
\label{tab:hyp_densenet}
\end{table}

\textbf{GoogLeNet:} In all experiments, we train for 300000 iterations with batch-size 32, with a step size of 30000, decreasing it by a ratio of 0.96. The hyperparameter for the confusion term is given in Table \ref{tab:hyp_googlenet}.

\begin{table}[h!]
\centering
\begin{tabular}{|l|l|}
\hline
\textbf{Dataset} & $\lambda$ \\ \hline
CUB-200-2011          & 10                               \\ \hline
NABirds          & 20                                             \\ \hline
Stanford Dogs    & 10                                                 \\ \hline
Cars             & 10                                              \\ \hline
Flowers-102      & 10                                             \\ \hline
Aircraft         & 15                                        \\ \hline
\end{tabular}
\caption{Regularization parameter $\lambda$ for GoogLeNet experiments.}
\label{tab:hyp_googlenet}
\end{table}

\textbf{VGGNet-16:} In all experiments, we train for 40000 iterations with batch-size 32, with a linear decay of the learning rate from an initial value of 0.1. The hyperparameter for the confusion term is given in Table \ref{tab:hyp_vggnet}.
\begin{table}[h!]
\centering
\begin{tabular}{|l|l|}
\hline
\textbf{Dataset} &  $\lambda$  \\ \hline
CUB2011          & 15                                       \\ \hline
NABirds          & 15                                   \\ \hline
Stanford Dogs    & 10                                     \\ \hline
Cars             & 10                                            \\ \hline
Flowers-102      & 10                                      \\ \hline
Aircraft         & 15                            \\ \hline
\end{tabular}
\caption{Regularization parameter $\lambda$ for VGGNet-16 experiments.}
\label{tab:hyp_vggnet}
\end{table}

\section{Mean and Standard Deviation for FGVC Results}
In Table~\ref{tab:sup_fgvc}, we provide the mean and standard deviation values over five independent runs for training with Pairwise Confusion with different baseline models. These results correspond to Table 2 in the main text.

{\begin{table*}[t]
\centering
\scriptsize
\setlength\tabcolsep{0.75pt}
\begin{tabular}{lcc}
\multicolumn{2}{c}{(A) CUB-200-2011} \\ \hline
Method & Top-1 \\ \hline
GoogLeNet & 68.19  (0.39)\\
\textbf{PC}-GoogLeNet & 72.65  (0.47)\\ \hline
ResNet-50 & 78.15  (0.19)\\
\textbf{PC}-ResNet-50 & 80.21 (0.21)\\\hline
VGGNet16 &73.28 (0.41)\\
\textbf{PC}-VGGNet16 & 76.48 (0.43)\\ \hline
Bilinear CNN~\cite{lin2015bilinear} &84.10 (0.19)\\
\textbf{PC}-BilinearCNN & 85.58 (0.28)\\ \hline
DenseNet-161 & 84.21 (0.27) \\
\textbf{PC}-DenseNet-161 & 86.87 (0.35)\\ \hline\\
\end{tabular}
\hspace{2pt}
\begin{tabular}{lcc}
\multicolumn{2}{c}{(B) Cars} \\ \hline
Method & Top-1 \\ \hline
GoogLeNet & 85.65 (0.14)\\
\textbf{PC}-GoogLeNet & 86.91 (0.16) \\\hline
ResNet-50 & 91.71 (0.22)\\
\textbf{PC}-ResNet-50 & 93.43 (0.24) \\ \hline
VGGNet16 &80.60 (0.39) \\
\textbf{PC}-VGGNet16 & 83.16 (0.32)\\ \hline
Bilinear CNN~\cite{lin2015bilinear} &91.20 (0.18) \\
\textbf{PC}-Bilinear CNN & 92.45 (0.23) \\ \hline
DenseNet-161 & 91.83 (0.16) \\
\textbf{PC}-DenseNet-161 & 92.86 (0.18)  \\ \hline\\
\end{tabular}
\hspace{2pt}
\begin{tabular}{lcc}
\multicolumn{2}{c}{(C) Aircrafts} \\ \hline
Method & Top-1 \\ \hline
GoogLeNet & 74.04 (0.51) \\
\textbf{PC}-GoogLeNet & 78.86 (0.37) \\ \hline
ResNet-50 & 81.19 (0.28) \\
\textbf{PC}-ResNet-50 & 83.40 (0.25) \\ \hline
VGGNet16 &74.17 (0.21)\\
\textbf{PC}-VGGNet16 & 77.20 (0.24)\\ \hline
Bilinear CNN~\cite{lin2015bilinear} &84.10 (0.11)\\
\textbf{PC}-Bilinear CNN & 85.78 (0.13) \\ \hline
DenseNet-161 &86.30 (0.35) \\
\textbf{PC}-DenseNet-161 & 89.24 (0.32) \\ \hline\\
\end{tabular}
\begin{tabular}{lcc}
\multicolumn{2}{c}{(D) NABirds} \\ \hline
Method & Top-1 \\ \hline
GoogLeNet & 70.66 (0.17)\\
\textbf{PC}-GoogLeNet & 72.01 (0.14) \\ \hline
ResNet-50~& 63.55 (0.28) \\
\textbf{PC}-ResNet-50 & 68.15 (0.31) \\  \hline
VGGNet16 &68.34 (0.19)\\
\textbf{PC}-VGGNet16 & 72.25 (0.25)  \\ \hline
Bilinear CNN~\cite{lin2015bilinear} & 80.90 (0.09) \\
\textbf{PC}-Bilinear CNN & 82.01 (0.12) \\ \hline
DenseNet-161 & 79.35 (0.25)\\
\textbf{PC}-DenseNet-161 & 82.79 (0.20) \\ \hline
\end{tabular}
\hspace{1pt}
\begin{tabular}{lcc}
\multicolumn{2}{c}{(E) Flowers-102} \\ \hline
Method & Top-1 \\ \hline
GoogLeNet & 82.55 (0.11)\\
\textbf{PC}-GoogLeNet & 83.03 (0.15) \\\hline
ResNet-50 & 92.46 (0.14)\\
\textbf{PC}-ResNet-50 & 93.50 (0.12) \\ \hline
VGGNet16 &85.15 (0.08)\\
\textbf{PC}-VGGNet16 & 86.19 (0.07)\\ \hline
Bilinear CNN~\cite{lin2015bilinear} &92.52 (0.13)\\
\textbf{PC}-Bilinear CNN & 93.65 (0.18) \\\hline
DenseNet-161 & 90.07 (0.17)\\
\textbf{PC}-DenseNet-161 &91.39 (0.15) \\ \hline
\end{tabular}
\hspace{1pt}
\begin{tabular}{lcc}
\multicolumn{2}{c}{(E) Stanford Dogs} \\ \hline
Method & Top-1 \\ \hline
GoogLeNet & 55.76 (0.36) \\
\textbf{PC}-GoogLeNet & 60.61 (0.29) \\\hline
ResNet-50 & 69.92 (0.32)\\
\textbf{PC}-ResNet-50 & 73.35 (0.33) \\\hline
VGGNet16 &61.92 (0.40)\\
\textbf{PC}-VGGNet16 & 65.51 (0.42) \\ \hline
Bilinear CNN~\cite{lin2015bilinear} &82.13 (0.12)\\
\textbf{PC}-Bilinear CNN & 83.04 (0.09)\\  \hline
DenseNet-161 & 81.18 (0.27)\\
\textbf{PC}-DenseNet-161 & 83.75(0.28)\\ \hline
\end{tabular}
\vspace{5pt}
\caption{Pairwise Confusion (\textbf{PC}) obtains state-of-the-art performance on six widely-used fine-grained visual classification datasets (A-F). Improvement over the baseline model is reported as $(\Delta)$. All results averaged over 5 trials with standard deviations reported in parentheses.\label{tab:sup_fgvc}}
\end{table*}

\section{Comparison with Regularization}
We additionally compare the performance of our optimization technique with other regularization methods as well. We first compare Pairwise Comparison with with Label-Smoothing Regularization (LSR) on all six FGVC datasets for VGG-Net16, ResNet-50 and DenseNet-161.  These results are summarized in Table~\ref{tab:sup2}. Next, in Table~\ref{tab:sup1}, we compare the performance of Pairwise Confusion (\textbf{PC}) with several additional regularization techniques on the CIFAR-10 and CIFAR-100 datasets using two small architectures: CIFAR-10 Quick (C10Quick) and CIFAR-10 Full (C10Full), which are standard models available in the Caffe framework.

\begin{table*}[t]
  \centering
  \small
  \scalebox{0.9}{\begin{tabular}{c|c|cccccc}
    \hline \hline
    \multicolumn{2}{c}{\textbf{Method}} & CUB-200-2011 & Cars & Aircrafts & NABirds & Flowers-102 & Stanford Dogs \\ \hline
    \multirow{2}{*}{VGG-Net16} & \textbf{PC} & 72.65 & 83.16 & 77.20 & 72.25 & 86.19 & 65.51 \\ \cline{2-8}
     & LSR & 70.03 & 81.45 & 75.06 & 69.28 & 83.98 & 63.06 \\ \hline
     \multirow{2}{*}{ResNet-50} & \textbf{PC} & 80.21 & 93.43 & 83.40 & 68.15 & 93.50 & 73.35 \\ \cline{2-8}
      & LSR & 78.20 & 92.04 & 81.26 & 64.02 & 92.48 & 70.03 \\ \hline
      \multirow{2}{*}{DenseNet-161} & \textbf{PC} & 86.87 & 92.86 & 89.24 & 82.79 & 91.39 & 83.75 \\ \cline{2-8}
       & LSR & 84.86 & 91.96 & 87.05 & 80.11 & 90.24 & 85.68 \\ \hline
  \end{tabular}}
  \caption{Comparison with Label Smoothing Regularization (LSR)~\cite{szegedy2015going}.}
  \label{tab:sup2}
\end{table*}

\begin{table*}[!h]
\centering
\scalebox{0.63}{\begin{tabular}{c|ccc|ccc|ccc}
\hline
\hline
& \multicolumn{3}{c|}{CIFAR-10 on C10Quick} & \multicolumn{3}{c|}{CIFAR-10 on C10Full} & \multicolumn{3}{c}{CIFAR-100 on C10Quick} \\
Method & Train &Test &$\Delta$ & Train & Test & $\Delta$ & Train & Test & $\Delta$ \\
\hline
None &100.00 (0.00)  &75.54 (0.17) &  24.46 (0.23)& 95.15 (0.65)& 81.45 (0.22)& 14.65 (0.17) & 100.00 (0.03) & 42.41 (0.16) & 57.59 (0.29)\\
Weight-decay~\cite{krogh1991simple} &100.00 (0.00) &75.61 (0.18)&24.51 (0.34)&95.18 (0.19)& 81.53 (0.21)& 14.73  (0.20)& 100.00  (0.05)& 42.87  (0.19)& 57.13 (0.27)\\
DeCov \cite{cogswell2015reducing} \tablefootnote{Due to the lack of publicly available software implementations of DeCov, we are unable to report the performance of DeCov on CIFAR-10 Full.} & 88.78  (0.23)& 79.75  (0.17)& 8.04  (0.16)& - & - & - & 72.53  & 45.10 & 27.43 \\
Dropout~\cite{srivastava2014dropout} &99.5  (0.12)&79.41 (0.12) &20.09  (0.34) & 92.15 (0.19)& 82.40 (0.14)   & 9.81 (0.25)& 75.00 (0.11)& 45.89 (0.14)& 29.11 (0.20)\\ \hline
\textbf{PC} & 92.25 (0.14) & 80.51 (0.20)  & 10.74 (0.28) & 93.88 (0.21)& 82.67 (0.12)   & 11.21 (0.34)& 72.12 (0.05)& 46.72 (0.12)& 25.50 (0.14)\\\hline
\textbf{PC} + Dropout  &93.04 (0.19)  &\textbf{81.13} (0.22)& 11.01 (0.32)&93.85 (0.23)&83.57 (0.20) & 10.28 (0.27)& 71.15 (0.12)& \textbf{49.22} (0.08)& 21.93 (0.22)\\\hline
\end{tabular}}
\vspace{5pt}
\caption{Image classification performance and train-val gap ($\Delta$)) for Pairwise Confusion (\textbf{PC}) and popular regularization methods. The standard deviation across trials is mentioned in parentheses.}
\label{tab:sup1}
\end{table*}
\section{Changes to Class-wise Prediction Accuracy}
 We find that while the average and lowest per-class accuracy increase when training with PC, there is a small decline in top-performing class accuracy (See Table~\ref{table1}). Moreover, the standard deviation in per-class accuracy is reduced as well. We also found that using PC slightly increased false positive errors while obtaining a larger reduction in false negative errors. For example, on CUB-200-2011 with ResNet-50, the average false positive error is increased by 0.06\%, but the average false negative error is reduced by 0.13\%. So while some additional mistakes are made in terms of false positives, we curb/reduce the problem of classifier overconfidence by a larger margin.

 \begin{table}[t]
 \centering
 \begin{tabular}{l|c|c|c|c}
   \hline \hline
 Class Accuracy              & Best  & Worst & Mean & Std. Dev.  \\
 \hline
 Baseline           & 91.14                          & 68.34                            & 78.15         & 5.12                     \\
 PC & 90.67                          & 70.95                            & 80.21         & 4.22                     \\ \hline \hline
 \end{tabular}
 \caption{Class-wise Performance Comparison on CUB-200-211 for ResNet-50}
 \label{table1}
 \end{table}
 
\bibliographystyle{splncs}
\bibliography{egbib}

\begin{thebibliography}{10}

\bibitem{lin2015bilinear}
Lin, T.Y., RoyChowdhury, A., Maji, S.:
\newblock Bilinear cnn models for fine-grained visual recognition.
\newblock IEEE International Conference on Computer Vision (2015)  1449--1457

\bibitem{jaderberg2015spatial}
Jaderberg, M., Simonyan, K., Zisserman, A., Kavukcuoglu, K.:
\newblock Spatial transformer networks.
\newblock Advances in Neural Information Processing Systems (2015)  2017--2025

\bibitem{zhang2016weakly}
Zhang, Y., Wei, X.S., Wu, J., Cai, J., Lu, J., Nguyen, V.A., Do, M.N.:
\newblock Weakly supervised fine-grained categorization with part-based image
  representation.
\newblock IEEE Transactions on Image Processing \textbf{25}(4) (2016)
  1713--1725

\bibitem{krause2015fine}
Krause, J., Jin, H., Yang, J., Fei-Fei, L.:
\newblock Fine-grained recognition without part annotations.
\newblock IEEE Conference on Computer Vision and Pattern Recognition (2015)
  5546--5555

\bibitem{zhang2015fine}
Zhang, N., Shelhamer, E., Gao, Y., Darrell, T.:
\newblock Fine-grained pose prediction, normalization, and recognition.
\newblock International Conference on Learning Representations Workshops (2015)

\bibitem{krause2016unreasonable}
Krause, J., Sapp, B., Howard, A., Zhou, H., Toshev, A., Duerig, T., Philbin,
  J., Fei-Fei, L.:
\newblock The unreasonable effectiveness of noisy data for fine-grained
  recognition.
\newblock European Conference on Computer Vision (2016)  301--320

\bibitem{cui2016fine}
Cui, Y., Zhou, F., Lin, Y., Belongie, S.:
\newblock Fine-grained categorization and dataset bootstrapping using deep
  metric learning with humans in the loop.
\newblock IEEE Conference on Computer Vision and Pattern Recognition (2016)

\bibitem{lin2017improved}
Lin, T.Y., Maji, S.:
\newblock Improved bilinear pooling with cnns.
\newblock arXiv preprint arXiv:1707.06772 (2017)

\bibitem{cui2017kernel}
Cui, Y., Zhou, F., Wang, J., Liu, X., Lin, Y., Belongie, S.:
\newblock Kernel pooling for convolutional neural networks.
\newblock IEEE Conference on Computer Vision and Pattern Recognition (2017)

\bibitem{imagenet_cvpr09}
Deng, J., Dong, W., Socher, R., Li, L.J., Li, K., Fei-Fei, L.:
\newblock Imagenet: A large-scale hierarchical image database.
\newblock IEEE Conference on Computer Vision and Pattern Recognition (2009)
  248--255

\bibitem{huang2016densely}
Huang, G., Liu, Z., van~der Maaten, L., Weinberger, K.Q.:
\newblock Densely connected convolutional networks.
\newblock IEEE Conference on Computer Vision and Pattern Recognition (2017)

\bibitem{he2016deep}
He, K., Zhang, X., Ren, S., Sun, J.:
\newblock Deep residual learning for image recognition.
\newblock IEEE Conference on Computer Vision and Pattern Recognition (2016)
  770--778

\bibitem{yao2011combining}
Yao, B., Khosla, A., Fei-Fei, L.:
\newblock Combining randomization and discrimination for fine-grained image
  categorization.
\newblock IEEE Conference on Computer Vision and Pattern Recognition (2011)
  1577--1584

\bibitem{yao2012codebook}
Yao, B., Bradski, G., Fei-Fei, L.:
\newblock A codebook-free and annotation-free approach for fine-grained image
  categorization.
\newblock IEEE Conference on Computer Vision and Pattern Recognition (2012)
  3466--3473

\bibitem{zhang2014part}
Zhang, N., Donahue, J., Girshick, R., Darrell, T.:
\newblock Part-based r-cnns for fine-grained category detection.
\newblock European Conference on Computer Vision (2014)  834--849

\bibitem{gao2016compact}
Gao, Y., Beijbom, O., Zhang, N., Darrell, T.:
\newblock Compact bilinear pooling.
\newblock IEEE Conference on Computer Vision and Pattern Recognition (2016)
  317--326

\bibitem{Wang_2016_CVPR}
Wang, Y., Choi, J., Morariu, V., Davis, L.S.:
\newblock Mining discriminative triplets of patches for fine-grained
  classification.
\newblock IEEE Conference on Computer Vision and Pattern Recognition (June
  2016)

\bibitem{ren2015faster}
Ren, S., He, K., Girshick, R., Sun, J.:
\newblock Faster r-cnn: Towards real-time object detection with region proposal
  networks.
\newblock Advances in neural information processing systems (2015)  91--99

\bibitem{branson2014bird}
Branson, S., Van~Horn, G., Belongie, S., Perona, P.:
\newblock Bird species categorization using pose normalized deep convolutional
  nets.
\newblock British Machine Vision Conference (2014)

\bibitem{zhang2012pose}
Zhang, N., Farrell, R., Darrell, T.:
\newblock Pose pooling kernels for sub-category recognition.
\newblock IEEE Computer Vision and Pattern Recognition (2012)  3665--3672

\bibitem{Moghimi2016}
Moghimi, M., Saberian, M., Yang, J., Li, L.J., Vasconcelos, N., Belongie, S.:
\newblock Boosted convolutional neural networks.
\newblock (2016)

\bibitem{chopra2005learning}
Chopra, S., Hadsell, R., LeCun, Y.:
\newblock Learning a similarity metric discriminatively, with application to
  face verification.
\newblock IEEE Conference on Computer Vision and Pattern Recognition (2005)
  539--546

\bibitem{parikh2011relative}
Parikh, D., Grauman, K.:
\newblock Relative attributes.
\newblock IEEE International Conference on Computer Vision (2011)  503--510

\bibitem{dubey2017modeling}
Dubey, A., Agarwal, S.:
\newblock Modeling image virality with pairwise spatial transformer networks.
\newblock arXiv preprint arXiv:1709.07914 (2017)

\bibitem{souri2016deep}
Souri, Y., Noury, E., Adeli, E.:
\newblock Deep relative attributes.
\newblock Asian Conference on Computer Vision (2016)  118--133

\bibitem{dubey2016deep}
Dubey, A., Naik, N., Parikh, D., Raskar, R., Hidalgo, C.A.:
\newblock Deep learning the city: Quantifying urban perception at a global
  scale.
\newblock European Conference on Computer Vision (2016)  196--212

\bibitem{singh2016end}
Singh, K.K., Lee, Y.J.:
\newblock End-to-end localization and ranking for relative attributes.
\newblock European Conference on Computer Vision (2016)  753--769

\bibitem{reed2014training}
Reed, S., Lee, H., Anguelov, D., Szegedy, C., Erhan, D., Rabinovich, A.:
\newblock Training deep neural networks on noisy labels with bootstrapping.
\newblock arXiv preprint arXiv:1412.6596 (2014)

\bibitem{xiao2015learning}
Xiao, T., Xia, T., Yang, Y., Huang, C., Wang, X.:
\newblock Learning from massive noisy labeled data for image classification.
\newblock IEEE Conference on Computer Vision and Pattern Recognition (2015)
  2691--2699

\bibitem{neelakantan2015adding}
Neelakantan, A., Vilnis, L., Le, Q.V., Sutskever, I., Kaiser, L., Kurach, K.,
  Martens, J.:
\newblock Adding gradient noise improves learning for very deep networks.
\newblock arXiv preprint arXiv:1511.06807 (2015)

\bibitem{szegedy2016rethinking}
Szegedy, C., Vanhoucke, V., Ioffe, S., Shlens, J., Wojna, Z.:
\newblock Rethinking the inception architecture for computer vision.
\newblock IEEE Conference on Computer Vision and Pattern Recognition (2016)
  2818--2826

\bibitem{nilsback2008automated}
Nilsback, M.E., Zisserman, A.:
\newblock Automated flower classification over a large number of classes.
\newblock Indian Conference on Computer Vision, Graphics \& Image Processing
  (2008)  722--729

\bibitem{wah2011caltech}
Wah, C., Branson, S., Welinder, P., Perona, P., Belongie, S.:
\newblock The caltech-ucsd birds-200-2011 dataset.
\newblock (2011)

\bibitem{krause20133d}
Krause, J., Stark, M., Deng, J., Fei-Fei, L.:
\newblock 3d object representations for fine-grained categorization.
\newblock IEEE International Conference on Computer Vision Workshops (2013)
  554--561

\bibitem{van2015building}
Van~Horn, G., Branson, S., Farrell, R., Haber, S., Barry, J., Ipeirotis, P.,
  Perona, P., Belongie, S.:
\newblock Building a bird recognition app and large scale dataset with citizen
  scientists: The fine print in fine-grained dataset collection.
\newblock IEEE Conference on Computer Vision and Pattern Recognition (2015)
  595--604

\bibitem{maji2013fine}
Maji, S., Rahtu, E., Kannala, J., Blaschko, M., Vedaldi, A.:
\newblock Fine-grained visual classification of aircraft.
\newblock arXiv preprint arXiv:1306.5151 (2013)

\bibitem{khosla2011novel}
Khosla, A., Jayadevaprakash, N., Yao, B., Li, F.F.:
\newblock Novel dataset for fine-grained image categorization: Stanford dogs.
\newblock IEEE International Conference on Computer Vision Workshops on
  Fine-Grained Visual Categorization (2011) ~1

\bibitem{krizhevsky2014cifar}
Krizhevsky, A., Nair, V., Hinton, G.:
\newblock The cifar-10 dataset otkrist (2014)

\bibitem{netzer2011reading}
Netzer, Y., Wang, T., Coates, A., Bissacco, A., Wu, B., Ng, A.Y.:
\newblock Reading digits in natural images with unsupervised feature learning.
\newblock NIPS workshop on deep learning and unsupervised feature learning (2)
  (2011) ~5

\bibitem{jeffreys1998theory}
Jeffreys, H.:
\newblock The theory of probability.
\newblock OUP Oxford (1998)

\bibitem{kullback1951information}
Kullback, S., Leibler, R.A.:
\newblock On information and sufficiency.
\newblock The annals of mathematical statistics \textbf{22}(1) (1951)  79--86

\bibitem{szegedy2015going}
Szegedy, C., Liu, W., Jia, Y., Sermanet, P., Reed, S., Anguelov, D., Erhan, D.,
  Vanhoucke, V., Rabinovich, A.:
\newblock Going deeper with convolutions.
\newblock IEEE Conference on Computer Vision and Pattern Recognition (2015)
  1--9

\bibitem{szekely2013energy}
Sz{\'e}kely, G.J., Rizzo, M.L.:
\newblock Energy statistics: A class of statistics based on distances.
\newblock Journal of statistical planning and inference \textbf{143}(8) (2013)
  1249--1272

\bibitem{simonyan2014very}
Simonyan, K., Zisserman, A.:
\newblock Very deep convolutional networks for large-scale image recognition.
\newblock arXiv preprint arXiv:1409.1556 (2014)

\bibitem{jia2014caffe}
Jia, Y., Shelhamer, E., Donahue, J., Karayev, S., Long, J., Girshick, R.,
  Guadarrama, S., Darrell, T.:
\newblock Caffe: Convolutional architecture for fast feature embedding.
\newblock ACM international conference on Multimedia (2014)  675--678

\bibitem{pytorch}
Paskze, A., Chintala, S.:
\newblock {Tensors and Dynamic neural networks in Python with strong GPU
  acceleration}.
\newblock \url{https://github.com/pytorch} Accessed: [January 1, 2017].

\bibitem{zhang2016picking}
Zhang, X., Xiong, H., Zhou, W., Lin, W., Tian, Q.:
\newblock Picking deep filter responses for fine-grained image recognition.
\newblock IEEE Conference on Computer Vision and Pattern Recognition (2016)
  1134--1142

\bibitem{liu2016hierarchical}
Liu, M., Yu, C., Ling, H., Lei, J.:
\newblock Hierarchical joint cnn-based models for fine-grained cars
  recognition.
\newblock International Conference on Cloud Computing and Security (2016)
  337--347

\bibitem{simon2017generalized}
Simon, M., Gao, Y., Darrell, T., Denzler, J., Rodner, E.:
\newblock Generalized orderless pooling performs implicit salient matching.
\newblock International Conference on Computer Vision (ICCV) (2017)

\bibitem{kong2016low}
Kong, S., Fowlkes, C.:
\newblock Low-rank bilinear pooling for fine-grained classification.
\newblock IEEE Conference on Computer Vision and Pattern Recognition (2017)
  7025--7034

\bibitem{angelova2013efficient}
Angelova, A., Zhu, S.:
\newblock Efficient object detection and segmentation for fine-grained
  recognition.
\newblock IEEE Conference on Computer Vision and Pattern Recognition (2013)
  811--818

\bibitem{Razavian_2014_CVPR_Workshops}
Sharif~Razavian, A., Azizpour, H., Sullivan, J., Carlsson, S.:
\newblock C{NN} features off-the-shelf: An astounding baseline for recognition.
\newblock IEEE Conference on Computer Vision and Pattern Recognition Workshops
  (June 2014)

\bibitem{selvaraju2016grad}
Selvaraju, R.R., Das, A., Vedantam, R., Cogswell, M., Parikh, D., Batra, D.:
\newblock Grad-cam: Why did you say that? visual explanations from deep
  networks via gradient-based localization.
\newblock arXiv preprint arXiv:1610.02391 (2016)

\bibitem{krogh1991simple}
Krogh, A., Hertz, J.A.:
\newblock A simple weight decay can improve generalization.
\newblock NIPS \textbf{4} (1991)  950--957

\bibitem{cogswell2015reducing}
Cogswell, M., Ahmed, F., Girshick, R., Zitnick, L., Batra, D.:
\newblock Reducing overfitting in deep networks by decorrelating
  representations.
\newblock arXiv preprint arXiv:1511.06068 (2015)

\bibitem{srivastava2014dropout}
Srivastava, N., Hinton, G.E., Krizhevsky, A., Sutskever, I., Salakhutdinov, R.:
\newblock Dropout: a simple way to prevent neural networks from overfitting.
\newblock Journal of Machine Learning Research \textbf{15}(1) (2014)
  1929--1958

\end{thebibliography}
\end{document}


\title{Supplementary Material : Optimizing Energy Distance for Fine Grained Visual Classification}
\newif\ifnote
\notetrue
\newcommand{\NNnote}[1]{\ifnote \textcolor{blue}{[{\em {\bf **Nikhil:} #1}]} \fi}
\newcommand{\ADnote}[1]{\ifnote \textcolor{red}{[{\em {\bf **Abhi:} #1}]} \fi}
\newcommand{\OGnote}[1]{\ifnote \textcolor{cyan}{[{\em {\bf **Otkrist:} #1}]} \fi}
\newcommand{\hsp}{\hspace{-12pt}}

\maketitle

\renewcommand{\thetable}{S\arabic{table}}
\renewcommand{\thefigure}{S\arabic{figure}}
\renewcommand{\thesection}{S\arabic{section}}

Here we describe additional experimental and analytical content following our primary results in the main paper. We begin by describing the formulation details, training procedure for the models we use in our experiments along with additional results, and weighing parameters obtained through grid search.

\section{Energy Distance}
\subsection{Bounds on Pairwise Energy Distance Loss}
We can show that $\bm{\mathcal L}_p$ is a conservative estimator for the symmetric KL-divergence between two conditional probability distributions.
\begin{lemma}
  $\bm{\mathcal L}_p(\bm x_1, \bm x_2)$ provides a lower bound for the symmetric KL-divergence between two conditional probability distributions $\bm x_1$ and $\bm x_2$.
\end{lemma}
\begin{proof}
The symmetric KL-divergence $\mathbb D_S ( \bm x_1, \bm x_2)$ can be given by:
\begin{eqnarray}
\mathbb D_S ( \bm x_1, \bm x_2) = \mathbb{D}_{\mathsf{KL}}\infdivx{ \bm x_1 } {\bm x_2 } + \mathbb{D}_{\mathsf{KL}}\infdivx{ \bm x_2 } {\bm x_1 }
\end{eqnarray}
By Pinsker's Inequality, we know:
\begin{eqnarray}
2 \delta (\bm x, \bm y)^2 \leq \mathbb{D}_{\mathsf{KL}}\infdivx{ \bm x } {\bm y } \label{pinsker2}
\end{eqnarray}
for two distributions $\bm x$ and $\bm y$, and $\delta (\bm x, \bm y)$ denoting the total variation distance. In a finite probability space, we know that:
\begin{eqnarray}
2\delta (\bm x, \bm y) = || \bm x  - \bm y ||_1 = || \bm y - \bm x ||_1 \label{fp_eqn}
\end{eqnarray}
Squaring (\ref{fp_eqn}), replacing in (\ref{pinsker2}) and adding resulting equations for both directions of divergence, we get:
\begin{align}
\frac{1}{2}||\bm x_2 - \bm x_1||_1^2 + \frac{1}{2}||\bm x_1 - \bm x_2||_1^2 &\leq \mathbb D_S ( \bm x_1, \bm x_2)\\
||\bm x_2 - \bm x_1||_1^2 &\leq \mathbb D_S ( \bm x_1, \bm x_2)
\end{align}
For finite-dimensional $\bm x$, we know that $||\bm x||_2 \leq ||\bm x||_1$. Hence, we can write:
\begin{align}
||\bm x_2 - \bm x_1||_2^2 \leq ||\bm x_2 - \bm x_1||_1^2 &\leq \mathbb D_S ( \bm x_1, \bm x_2) \\
\implies \bm{\mathcal L}_p &\leq \mathbb D_S.
\end{align}
\end{proof}

\section{Training Details}
\subsection{GoogLeNet:}
We follow the same training procedure for all datasets as described here: We train for 300000 iterations with batch-size 32, with a step size of 30000, decreasing it by a ratio of 0.96. The hyperparameter for the confusion term is given in Table \ref{tab:hyp_googlenet}.

\begin{table}[h]
\centering
\begin{tabular}{|l|l|}
\hline
\textbf{Dataset} & $\lambda$ \\ \hline
CUB-200-2011          & 10                               \\ \hline
NABirds          & 20                                             \\ \hline
Stanford Dogs    & 10                                                 \\ \hline
Cars             & 10                                              \\ \hline
Flowers-102      & 10                                             \\ \hline
Aircraft         & 15                                        \\ \hline
\end{tabular}
\caption{Regularization parameter $\lambda$ for GoogLeNet experiments.}
\label{tab:hyp_googlenet}
\end{table}

\subsection{VGGNet-16:}
We follow the same training procedure for all datasets as described here: We train for 40000 iterations with batch-size 32, with a linear decay of the learning rate from an initial value of 0.1. The hyperparameter for the confusion term is given in Table \ref{tab:hyp_vggnet}.
\begin{table}[h]
\centering
\begin{tabular}{|l|l|}
\hline
\textbf{Dataset} &  $\lambda$  \\ \hline
CUB2011          & 15                                       \\ \hline
NABirds          & 15                                   \\ \hline
Stanford Dogs    & 10                                     \\ \hline
Cars             & 10                                            \\ \hline
Flowers-102      & 10                                      \\ \hline
Aircraft         & 15                            \\ \hline
\end{tabular}
\caption{Regularization parameter $\lambda$ for VGGNet-16 experiments.}
\label{tab:hyp_vggnet}
\end{table}

\subsection{ResNet-50:}
We follow the same training procedure for all datasets as described here: We train for 40000 iterations with batch-size 8, with a linear decay of the learning rate from an initial value of 0.1. The hyperparameter for the confusion term is given in Table \ref{tab:hyp_resnet}.
\begin{table}[h]
\centering
\begin{tabular}{|l|l|}
\hline
\textbf{Dataset} &  $\lambda$  \\ \hline
CUB2011          & 10                                       \\ \hline
NABirds          & 15                                   \\ \hline
Stanford Dogs    & 10                                     \\ \hline
Cars             & 10                                            \\ \hline
Flowers-102      & 10                                      \\ \hline
Aircraft         & 15                            \\ \hline
\end{tabular}
\caption{Regularization parameter $\lambda$ for ResNet-50 experiments.}
\label{tab:hyp_resnet}
\end{table}

\subsection{Bilinear and Compact Bilinear CNN:}
We follow the 2-step training procedure for all datasets as described in the following URL: \url{https://github.com/gy20073/compact_bilinear_pooling/tree/master/caffe-20160312/examples/compact_bilinear}. In addition, we repeat the described step 2 without the loss for all experiments on confusion from the obtained weights after performing Step 2 with the loss, and obtain an additional 0.5 percent gain in performance. We describe the hyperparameter choice in Table \ref{tab:hyp_bilinear}.
\begin{table}[h]
\centering
\begin{tabular}{|l|l|}
\hline
\textbf{Dataset} & $\lambda$  \\ \hline
CUB2011          & 20                            \\ \hline
NABirds          & 20                           \\ \hline
Stanford Dogs    & 10              \\ \hline
Cars             & 10                             \\ \hline
Flowers-102      & 10                           \\ \hline
Aircraft      	 & 10                           \\ \hline
\end{tabular}
\caption{Regularization parameter $\lambda$ for Bilinear CNN experiments.}
\label{tab:hyp_bilinear}
\end{table}

\subsection{DenseNet-161:}
We follow the same training procedure for all datasets as described here: We train for 40000 iterations with batch-size 32, with a linear decay of the learning rate from an initial value of 0.1. The hyperparameter for the confusion term is given in Table \ref{tab:hyp_densenet}.
\begin{table}[]
\centering
\begin{tabular}{|l|l|}
\hline
\textbf{Dataset} &  $\lambda$  \\ \hline
CUB2011          & 10                                       \\ \hline
NABirds          & 15                                   \\ \hline
Stanford Dogs    & 10                                     \\ \hline
Cars             & 15                                            \\ \hline
Flowers-102      & 10                                      \\ \hline
Aircraft         & 15                            \\ \hline
\end{tabular}
\caption{Regularization parameter $\lambda$ for DenseNet-161 experiments.}
\label{tab:hyp_densenet}
\end{table}

\section{Performance Details}
Here we provide complete results for different models (along with standard deviation) for each of the experiments.
\subsection{CUB-200-2011} Table~\ref{tab:sup3} summarizes the results for CUB-200-2011 with 5 baseline models.

\begin{table}[H]
\centering
\begin{tabular}{lcc}
\multicolumn{2}{c}{CUB-200-2011} \\ \hline
Method & Top-1 \\ \hline
GoogLeNet & 68.19  (0.39)\\
\textbf{PEDM}-GoogLeNet & 72.65  (0.47)\\ \hline
ResNet-50 & 78.15  (0.19)\\
\textbf{PEDM}-ResNet-50 & 80.21 (0.21)\\\hline
VGGNet16 &73.28 (0.41)\\
\textbf{PEDM}-VGGNet16 & 76.48 (0.43)\\ \hline
Bilinear CNN~\cite{lin2015bilinear} &84.10 (0.19)\\
\textbf{PEDM}-BilinearCNN & 85.58 (0.28)\\ \hline
DenseNet-161 & 84.21 (0.27) \\
\textbf{PEDM}-DenseNet-161 & 86.87 (0.35)\\ \hline\\
\end{tabular}
\caption{Performance shown (with deviation across trials in parenthesis) for CUB-200-2011 dataset.}
\label{tab:sup3}
\end{table}

\subsection{Cars} Table~\ref{tab:sup4} summarizes the results for Cars with 5 baseline models.
\begin{table}[H]
\centering
\begin{tabular}{lcc}
\multicolumn{2}{c}{Cars} \\ \hline
Method & Top-1 \\ \hline
GoogLeNet & 85.65 (0.14)\\
\textbf{PEDM}-GoogLeNet & 86.91 (0.16) \\\hline
ResNet-50 & 91.71 (0.22)\\
\textbf{PEDM}-ResNet-50 & 93.43 (0.24) \\ \hline
VGGNet16 &80.60 (0.39) \\
\textbf{PEDM}-VGGNet16 & 83.16 (0.32)\\ \hline
Bilinear CNN~\cite{lin2015bilinear} &91.20 (0.18) \\
\textbf{PEDM}-Bilinear CNN & 92.45 (0.23) \\ \hline
DenseNet-161 & 91.83 (0.16) \\
\textbf{PEDM}-DenseNet-161 & 92.86 (0.18)  \\ \hline\\
\end{tabular}
\caption{Performance shown (with deviation across trials in parenthesis) for Cars dataset.}
\label{tab:sup4}
\end{table}
\subsection{Aircrafts} Table~\ref{tab:sup5} summarizes the results for Aircrafts with 5 baseline models.
\begin{table}[H]
\centering
\begin{tabular}{lcc}
\multicolumn{2}{c}{Aircrafts} \\ \hline
Method & Top-1 \\ \hline
GoogLeNet & 74.04 (0.51) \\
\textbf{PEDM}-GoogLeNet & 78.86 (0.37) \\ \hline
ResNet-50 & 81.19 (0.28) \\
\textbf{PEDM}-ResNet-50 & 83.40 (0.25) \\ \hline
VGGNet16 &74.17 (0.21)\\
\textbf{PEDM}-VGGNet16 & 77.20 (0.24)\\ \hline
Bilinear CNN~\cite{lin2015bilinear} &84.10 (0.11)\\
\textbf{PEDM}-Bilinear CNN & 85.78 (0.13) \\ \hline
DenseNet-161 &86.30 (0.35) \\
\textbf{PEDM}-DenseNet-161 & 89.24 (0.32) \\ \hline\\
\end{tabular}
\caption{Performance shown (with deviation across trials in parenthesis) for Aircrafts dataset.}
\label{tab:sup5}
\end{table}
\subsection{NABirds} Table~\ref{tab:sup6} summarizes the results for NABirds with 5 baseline models.
\begin{table}[H]
\centering
\begin{tabular}{lcc}
\multicolumn{2}{c}{NABirds} \\ \hline
Method & Top-1 \\ \hline
GoogLeNet & 70.66 (0.17)\\
\textbf{PEDM}-GoogLeNet & 72.01 (0.14) \\ \hline
ResNet-50~& 63.55 (0.28) \\
\textbf{PEDM}-ResNet-50 & 68.15 (0.31) \\  \hline
VGGNet16 &68.34 (0.19)\\
\textbf{PEDM}-VGGNet16 & 72.25 (0.25)  \\ \hline
Bilinear CNN~\cite{lin2015bilinear} & 80.90 (0.09) \\
\textbf{PEDM}-Bilinear CNN & 82.01 (0.12) \\ \hline
DenseNet-161 & 79.35 (0.25)\\
\textbf{PEDM}-DenseNet-161 & 82.79 (0.20) \\ \hline
\end{tabular}
\caption{Performance shown (with deviation across trials in parenthesis) for NABirds dataset.}
\label{tab:sup6}
\end{table}

\subsection{Flowers-102} Table~\ref{tab:sup7} summarizes the results for NABirds with 5 baseline models.
\begin{table}[H]
\centering
\begin{tabular}{lcc}
\multicolumn{2}{c}{Flowers-102} \\ \hline
Method & Top-1 \\ \hline
GoogLeNet & 82.55 (0.11)\\
\textbf{PEDM}-GoogLeNet & 83.03 (0.15) \\\hline
ResNet-50 & 92.46 (0.14)\\
\textbf{PEDM}-ResNet-50 & 93.50 (0.12) \\ \hline
VGGNet16 &85.15 (0.08)\\
\textbf{PEDM}-VGGNet16 & 86.19 (0.07)\\ \hline
Bilinear CNN~\cite{lin2015bilinear} &92.52 (0.13)\\
\textbf{PEDM}-Bilinear CNN & 93.65 (0.18) \\\hline
DenseNet-161 & 90.07 (0.17)\\
\textbf{PEDM}-DenseNet-161 &91.39 (0.15) \\ \hline
\end{tabular}
\caption{Performance shown (with deviation across trials in parenthesis) for Flowers-102 dataset.}
\label{tab:sup7}
\end{table}
\subsection{Stanford Dogs} Table~\ref{tab:sup8} summarizes the results for Stanford Dogs with 5 baseline models.
\begin{table}[H]
\centering
\begin{tabular}{lcc}
\multicolumn{2}{c}{Stanford Dogs} \\ \hline
Method & Top-1 \\ \hline
GoogLeNet & 55.76 (0.36) \\
\textbf{PEDM}-GoogLeNet & 60.61 (0.29) \\\hline
ResNet-50 & 69.92 (0.32)\\
\textbf{PEDM}-ResNet-50 & 73.35 (0.33) \\\hline
VGGNet16 &61.92 (0.40)\\
\textbf{PEDM}-VGGNet16 & 65.51 (0.42) \\ \hline
Bilinear CNN~\cite{lin2015bilinear} &82.13 (0.12)\\
\textbf{PEDM}-Bilinear CNN & 83.04 (0.09)\\  \hline
DenseNet-161 & 81.18 (0.27)\\
\textbf{PEDM}-DenseNet-161 & 83.75(0.28)\\ \hline
\end{tabular}
\caption{Performance shown (with deviation across trials in parenthesis) for Stanford Dogs dataset.}
\label{tab:sup8}
\end{table}
\section{Comparison with Regularization}
We additionally compare the performance of our optimization technique with other regularization methods as well. In Table~\ref{tab:sup1} we first compare the performance of \textbf{PEDM} with several additional regularization techniques on the CIFAR-10 and CIFAR-100 datasets using two small architectures: CIFAR-10 Quick (C10Quick) and CIFAR-10 Full (C10Full), which are standard models available in the Caffe framework.

We also compare with Label-Smoothing Regularization (LSR) on all six FGVC datasets for VGG-Net16, ResNet-50 and DenseNet-161. These results are summarized in Table~\ref{tab:sup2}.
\begin{table*}[!h]
\centering
\scalebox{0.76}{\begin{tabular}{c|ccc|ccc|ccc}
\hline
\hline
& \multicolumn{3}{c|}{CIFAR-10 on C10Quick} & \multicolumn{3}{c|}{CIFAR-10 on C10Full} & \multicolumn{3}{c}{CIFAR-100 on C10Quick} \\
Method & Train &Test &$\Delta$ & Train & Test & $\Delta$ & Train & Test & $\Delta$ \\
\hline
None &100.00 (0.00)  &75.54 (0.17) &  24.46 (0.23)& 95.15 (0.65)& 81.45 (0.22)& 14.65 (0.17) & 100.00 (0.03) & 42.41 (0.16) & 57.59 (0.29)\\
Weight-decay~\cite{krogh1991simple} &100.00 (0.00) &75.61 (0.18)&24.51 (0.34)&95.18 (0.19)& 81.53 (0.21)& 14.73  (0.20)& 100.00  (0.05)& 42.87  (0.19)& 57.13 (0.27)\\
DeCov \cite{cogswell2015reducing} \tablefootnote{Due to the lack of publicly available software implementations of DeCov, we are unable to report the performance of DeCov on CIFAR-10 Full.} & 88.78  (0.23)& 79.75  (0.17)& 8.04  (0.16)& - & - & - & 72.53  & 45.10 & 27.43 \\
Dropout~\cite{srivastava2014dropout} &99.5  (0.12)&79.41 (0.12) &20.09  (0.34) & 92.15 (0.19)& 82.40 (0.14)   & 9.81 (0.25)& 75.00 (0.11)& 45.89 (0.14)& 29.11 (0.20)\\ \hline
\textbf{PEDM} & 92.25 (0.14) & 80.51 (0.20)  & 10.74 (0.28) & 93.88 (0.21)& 82.67 (0.12)   & 11.21 (0.34)& 72.12 (0.05)& 46.72 (0.12)& 25.50 (0.14)\\\hline
\textbf{PEDM} + Dropout  &93.04 (0.19)  &\textbf{81.13} (0.22)& 11.01 (0.32)&93.85 (0.23)&83.57 (0.20) & 10.28 (0.27)& 71.15 (0.12)& \textbf{49.22} (0.08)& 21.93 (0.22)\\\hline
\end{tabular}}
\caption{Image classification performance and train-val gap ($\Delta$)) for \textbf{PEDM} and popular regularization methods. The standard deviation across trials is mentioned in parentheses.}
\label{tab:sup1}
\end{table*}

\begin{table*}[t]
  \centering
  \small
  \begin{tabular}{c|c|cccccc}
    \hline \hline
    \multicolumn{2}{c}{\textbf{Method}} & CUB-200-2011 & Cars & Aircrafts & NABirds & Flowers-102 & Stanford Dogs \\ \hline
    \multirow{2}{*}{VGG-Net16} & \textbf{PEDM} & 72.65 & 83.16 & 77.20 & 72.25 & 86.19 & 65.51 \\ \cline{2-8}
     & LSR & 70.03 & 81.45 & 75.06 & 69.28 & 83.98 & 63.06 \\ \hline
     \multirow{2}{*}{ResNet-50} & \textbf{PEDM} & 80.21 & 93.43 & 83.40 & 68.15 & 93.50 & 73.35 \\ \cline{2-8}
      & LSR & 78.20 & 92.04 & 81.26 & 64.02 & 92.48 & 70.03 \\ \hline
      \multirow{2}{*}{DenseNet-161} & \textbf{PEDM} & 86.87 & 92.86 & 89.24 & 82.79 & 91.39 & 83.75 \\ \cline{2-8}
       & LSR & 84.86 & 91.96 & 87.05 & 80.11 & 90.24 & 85.68 \\ \hline
  \end{tabular}
  \caption{Comparison with Label Smoothing Regularization (LSR)~\cite{szegedy2015going}.}
  \label{tab:sup2}
\end{table*}

{\small
\bibliographystyle{ieee}
\bibliography{egbib}
}